\gdef\@copyrightpermission{
  \begin{minipage}{0.2\columnwidth}
   \href{https://creativecommons.org/licenses/by/4.0/}{\includegraphics[width=0.90\textwidth]{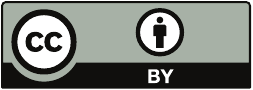}}
  \end{minipage}\hfill
  \begin{minipage}{0.8\columnwidth}
   \href{https://creativecommons.org/licenses/by/4.0/}{This work is licensed under a Creative Commons Attribution International 4.0 License.}
  \end{minipage}
  \vspace{5pt}
}
\title{ME-IGM: Individual-Global-Max in Maximum Entropy Multi-Agent Reinforcement Learning}
\author{Wen-Tse Chen}
\affiliation{
  \institution{Carnegie Mellon University}
  \city{Pittsburgh}
  \country{United States}}
\email{wentsec@andrew.cmu.edu}
\author{Yuxuan Li}
\affiliation{
  \institution{Zhejiang University}
  \city{Hangzhou}
  \country{China}}
\email{yuxuanli04@zju.edu.cn}
\author{Shiyu Huang}
\affiliation{
  \institution{XPeng Inc.}
  \country{China}}
\email{huangsy16@xiaopeng.com}
\author{Jiayu Chen}
\affiliation{
\institution{The University of Hong Kong}  
\institution{INFIFORCE Intelligent Tech. Co., Ltd.}
  \city{Hong Kong}
  \country{China}}
\email{jiayuc@hku.hk}
\author{Jeff Schneider}
\affiliation{
  \institution{Carnegie Mellon University}
  \city{Pittsburgh}
  \country{United States}}
\email{jeff4@andrew.cmu.edu}
\begin{abstract}

Multi-agent credit assignment is a fundamental challenge for cooperative multi-agent reinforcement learning (MARL), where a team of agents learn from shared reward signals.
The Individual-Global-Max (IGM) condition is a widely used principle for multi-agent credit assignment, requiring that the joint action determined by individual Q-functions maximizes the global Q-value. Meanwhile, the principle of maximum entropy has been leveraged to enhance exploration in MARL. However, we identify a critical limitation in existing maximum entropy MARL methods: a misalignment arises between local policies and the joint policy that maximizes the global Q-value, leading to violations of the IGM condition. To address this misalignment, we propose an order-preserving transformation. Building on it, we introduce ME-IGM, a novel maximum entropy MARL algorithm compatible with any credit assignment mechanism that satisfies the IGM condition while enjoying the benefits of maximum entropy exploration. We empirically evaluate two variants of ME-IGM: ME-QMIX and ME-QPLEX, in non-monotonic matrix games, and demonstrate their state-of-the-art performance across 17 scenarios in SMAC-v2 and Overcooked.

\end{abstract}
\keywords{ Multi Agent Reinforcement Learning; Maximum Entropy Reinforcement Learning; Centralized Training with Decentralized Execution}
\newcommand{\BibTeX}{\rm B\kern-.05em{\sc i\kern-.025em b}\kern-.08em\TeX}
\begin{document}


\pagestyle{fancy}
\fancyhead{}


\maketitle


\section{Introduction}
\begin{figure*}[t]
  \centering
  \includegraphics[width=0.45\linewidth]{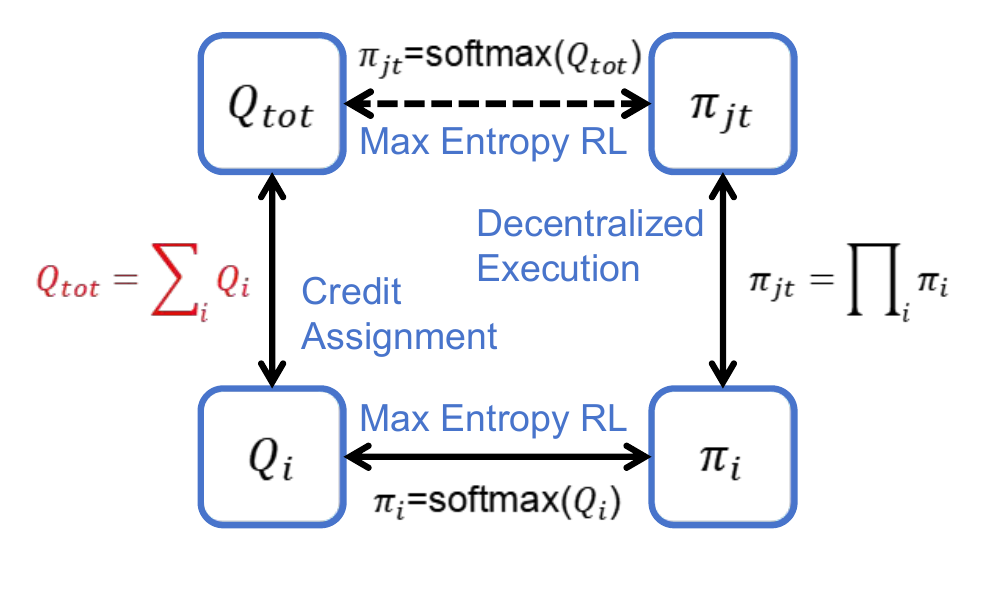}
  \includegraphics[width=0.45\linewidth]{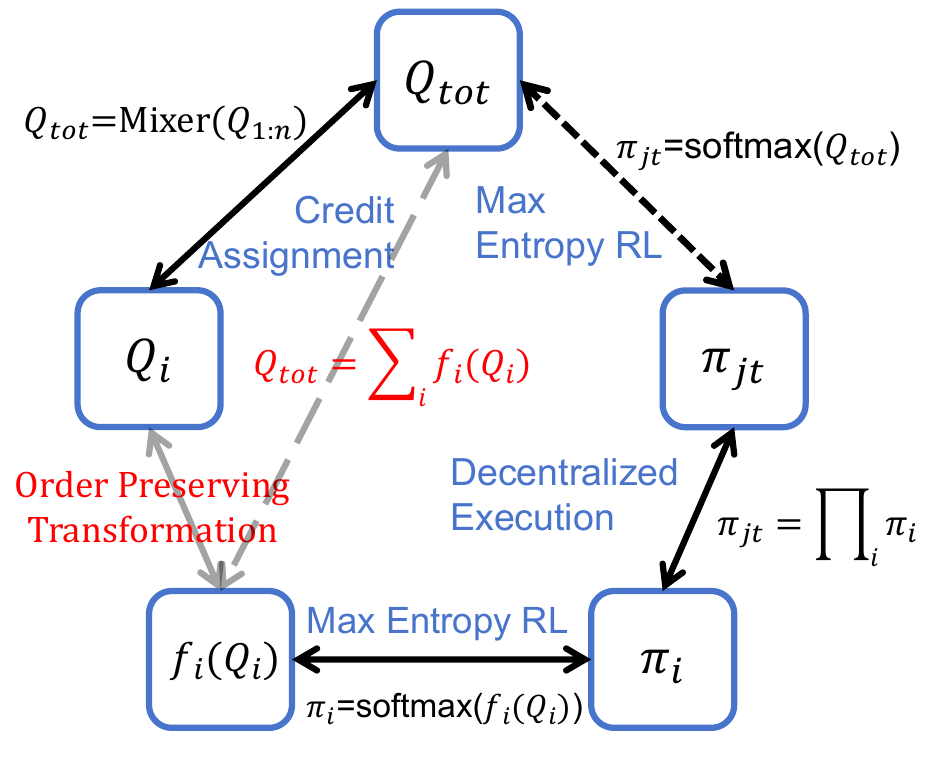}
  \caption{The figure illustrates the improvement of our approach compared to existing maximum entropy MARL methods. The left figure shows a straightforward approach to applying maximum entropy MARL in the CTDE context, where blue texts represent the desired objectives, and black texts indicate corresponding constraints. It reveals that existing maximum entropy MARL methods under the CTDE framework implicitly constrain the global Q-value to be the sum of local Q-values (as in VDN), significantly limiting the expressiveness of the critic network. The right figure depicts the improvements in ME-IGM. ME-IGM first applies any credit assignment mechanism that satisfies the IGM condition, such as QMIX and QPLEX, to obtain \(Q_i\) for each agent $i$. Given the meaningful order of local Q-values (for different actions), an order-preserving transformation $f_i$ converts \(Q_i\) to \(f_i(Q_i)\) as the policy logits. This transformation is optimized using a loss function that minimizes the expected difference between \(\sum_i f_i(Q_i)\) and \(Q_{tot}\), which guarantees monotonic policy improvement in maximum entropy MARL.}
  \Description{This image demonstrates improvement of our approach compared to existing maximum entropy MARL methods}
  \label{fig:teaser}
\end{figure*}


Collaborative multi-agent tasks, where a team of agents works together to complete a task and receives joint rewards, are inherently challenging for reinforcement learning (RL) agents. This difficulty arises due to the absence of individual reward signals for each agent and the significantly higher exploration requirements compared to single-agent scenarios. The challenge is further exacerbated by the exponential growth of the joint action space as the number of agents increases. In this work, we propose leveraging maximum entropy RL to promote exploration and adopting the individual-global-max (IGM) credit assignment mechanism~\citep{rashid2020monotonic} to effectively distribute joint rewards among the agents.


Maximum entropy RL~\cite{levine2018reinforcement}, recognized for promoting exploration~\cite{haarnoja2018soft}, smoothing the objective landscape~\cite{ahmed2019understanding}, and improving robustness~\cite{eysenbach2021maximum}, has been extensively applied in single-agent RL. However, as illustrated in Figure~\ref{fig:teaser}, naively extending maximum entropy RL to multi-agent settings under the centralized training and decentralized execution (CTDE) paradigm leads to uniform credit assignment, which limits the ability of multi-agent reinforcement learning (MARL) agents to learn from joint rewards. 

On the other hand, numerous studies have explored the development of effective credit assignment mechanisms. Among them, the IGM condition has demonstrated strong empirical performance by aligning the maximum local Q-values with the maximum global Q-value. Previous works have primarily applied the IGM condition in deterministic policy settings, where policies are derived using the argmax of local Q-values. 
However, in maximum entropy MARL methods, stochastic local policies are learned, and misalignment can occur between local policies and the maximum global Q-values. \textbf{That is, selecting joint actions based on the highest logits of local policies does not necessarily lead to maximizing the global Q-value.} 
This limitation highlights a critical challenge in achieving effective credit assignment within the maximum entropy MARL framework. This challenge, as we will demonstrate, is not unique to IGM-based methods but also represents a broader obstacle for actor-critic-based maximum entropy MARL algorithms, such as FOP~\citep{zhang2021fop}, hindering their ability to achieve optimal coordination.

We propose ME-IGM, the first CTDE maximum entropy MARL method that is compatible with 
the IGM condition. In particular, to address the misalignment between stochastic local policies and the maximum global Q-values, we introduce an order-preserving transformation (OPT) that maps local Q-values to policy logits while preserving their relative order. This ensures that selecting actions with the highest logits collectively leads to a joint action that maximizes the global Q-value. Additionally, the OPT is trained with a theoretically grounded objective, guaranteeing monotonic policy improvement in CTDE MARL.


Our main contributions are summarized as follows:
\begin{itemize}
\item We are the first to identify and empirically demonstrate a critical issue of existing maximum entropy MARL methods: the misalignment between stochastic local policies and the maximum global Q-values.
\item We introduce ME-IGM, the first CTDE maximum entropy MARL method that fully complies with the IGM condition. The core of ME-IGM is an order-preserving transformation (OPT), which maps local Q-values to policy logits while preserving their relative order. This ensures that selecting actions with the highest logits collectively leads to a joint action that maximizes the global Q-value. OPT can be seamlessly integrated into mainstream value decomposition MARL methods for a maximum entropy extension to enhance exploration. Additionally, we propose a theoretically grounded and straightforward objective for training these transformation operators.
\item We confirm the effectiveness of our algorithm through extensive evaluation on matrix games, SMAC-v2~\citep{ellis2022smacv2},  and Overcooked~\cite{carroll2019utility}, where our method attains state-of-the-art results.~\footnote{We have published our code at \url{https://github.com/WentseChen/Soft-QMIX}}
\end{itemize}

\section{Background}
\label{sec:preliminary}

\subsection{Multi-Agent Reinforcement Learning}
Cooperative MARL can be formalized as a Decentralized Partially Observable Markov Decision Process (Dec-POMDP)~\cite{oliehoek2016concise}. Formally, a Dec-POMDP is represented as a tuple \((\mathcal{A},S,U,T,r,O,G,\gamma)\), where \(\mathcal{A}\equiv\{1,...,n\}\) is the set of $n$ agents. $S$, $U$, $O$, and \(\gamma\) are state space, action space, observation space, and discount factor, respectively. 

During each discrete time step \(t\), each agent $i\in \mathcal{A}$ chooses an action $u^i\in U$, resulting in a collective joint action $\mathbf{u}\in \mathbf{U}\equiv U^n$. 
The function \(r(s,\mathbf{u})\) defines the immediate reward for all agents when the collective action \(\mathbf{u}\) is taken in the state \(s\).
$\mathcal{P}(s,\mathbf{u},s'):S\times\mathbf{U}\times S\rightarrow [0,\infty)$ is the state-transition function, which defines the probability density of the succeeding state $s'$ after taking action $\mathbf{u}$ in state $s$. In a Dec-POMDP, each agent receives only partial observations $o\in O$ according to the observation function $G(s,i): S\times\mathcal{A}\rightarrow O$. For simplicity, if a function depends on both $o^i$ and $s$, we will disregard $o^i$. Each agent uses a policy $\pi_i(u^i|o^i)$ to produce its action $u^i$. Note that the policy should be conditioned on the observation history $o^i_{1:t}$. For simplicity, we refer to $o^i_{1:t}$ as $o^i_t$ or $o^i$ in the whole paper. The joint policy is denoted as \(\pi_{jt}\). \(\rho_\pi(s_t, \mathbf{u_t})\) is used to represent the state-action marginals of the trajectory distribution induced by a policy \(\pi\).

\subsection{Individual-Global-Max Condition}
Individual-Global-Max (IGM)~\cite{rashid2020monotonic}  is a commonly used credit assignment method in value-decomposition-based cooperative MARL and is defined as follows:

\begin{definition}
\textbf{Individual-Global-Max (IGM)}
\label{def:IGM}
For joint q-function \(Q_{tot}\), if there exist individual q-functions \(\left[Q_{i}\right]_{i=1}^{n}\) such that the following holds:
\begin{equation}
\label{eq:action_max}
\underset{\mathbf{u_t}}{\mathrm{argmax}} \ Q_{tot}(\mathbf{u_t}, s_t) = 
\begin{pmatrix}
\underset{u_t^1}{\mathrm{argmax}} \ Q_1(u^1_t|o^1_t) \\
\vdots \\
\underset{u_t^n}{\mathrm{argmax}} \ Q_n(u^n_t|o^n_t)
\end{pmatrix}.
\end{equation}
Then, we say that $[Q_i]_{i=1}^{n}$ satisfy \(\textbf{IGM}\) for $Q_{tot}$ under $s_t$.
\end{definition}



\subsection{Related Works} \label{rws}

{\bf Multi-Agent RL.} In the CTDE framework, MARL primarily involves two common algorithmic approaches: policy gradient methods~\cite{lowe2017multi,yu2022surprising,lin2023tizero,foerster2018counterfactual,kuba2021trust,zhong2023heterogeneousagent} and value function decomposition methods~\cite{sunehag2017value,rashid2020monotonic,wang2020qplex}. Policy gradient methods first learn a centralized critic network, and then distill local policies using losses like the KL divergence. On the other hand, Value function decomposition addresses the credit assignment problem by decomposing the global value function into multiple local value functions. VDN~\cite{sunehag2017value} assumes that the global Q-function is the sum of local Q-functions.
QMIX~\cite{rashid2020monotonic,peng2021facmac} permits the mixer function to be any function with non-negative weights. QTRAN~\cite{son2019qtran} optimizes an additional pair of inequality constraints to construct a loss function.
QPLEX~\cite{wang2020qplex} use the dueling architecture to decompose the Q-function, achieving the same expressive power as QTRAN while being easier to optimize. The expressive power of the critic networks in these algorithms increases sequentially, but experimental results show that QMIX has better performance~\cite{hu2021rethinking}. Therefore, in this work, QMIX is used as the mechanism for credit assignment.

{\bf Maximum Entropy MARL.} Maximum entropy RL~\cite{levine2018reinforcement} is demonstrated to have advantages such as encouraging exploration~\cite{haarnoja2018soft} and increasing robustness~\cite{eysenbach2021maximum} in single-agent RL scenarios. In maximum entropy MARL, most works utilize an actor-critic (AC) architecture and typically aim to minimize the KL divergence between the local actor and the centralized critic~\cite{he2022multiagent, guo2022learning, pu2021decomposed, zhang2021fop}. 

\textbf{Comparison with Actor-Critic Approaches.} 
Most existing maximum entropy MARL methods (e.g., mSAC~\cite{pu2021decomposed}, FOP~\cite{zhang2021fop}, and HASAC~\cite{liu2023maximum}) rely on the Actor-Critic framework. Our value-based ME-IGM differs from them in three aspects. 
First, \textbf{Order Preservation}: AC-based methods may suffer from a misalignment where the order of actions preferred by the local actor does not match the local Q-function due to approximation errors. In contrast, our method uses Order-Preserving Transformations (OPT) to strictly enforce this alignment, ensuring the IGM condition is met. 
Second, \textbf{Objective Function}: AC methods typically minimize KL divergence to distill policies. Our method minimizes the MSE loss between logits, which has been shown to be superior in distillation contexts~\cite{kim2021comparing}. 
Third, \textbf{Specific Baselines}: FOP~\cite{zhang2021fop} employs a credit assignment mechanism similar to QPLEX but relies on the Individual-Global-Optimal (IGO) assumption. While FOP argues that locally optimizing policies reduces the joint policy gap, it does not explicitly guarantee alignment. HASAC~\cite{liu2023maximum} avoids the IGO assumption through sequential optimization but ignores the misalignment problem between stochastic policies and Q-values. Our work identifies this misalignment as a critical bottleneck and proposes OPT to resolve it, thereby achieving state-of-the-art performance.

\section{Misalignment Between Local Policies and the Maximum Global Q-Value} \label{misalignment}



\textbf{The Misalignment Problem of Maximum Entropy MARL:} 
While the IGM condition defines the relationship between local Q-values and the global Q-value, what is often more important in practice is the alignment between local policies and the global Q-value. In other words, joint actions that maximize the global Q-value can be obtained by querying local policies. 
Previous work in maximum entropy MARL~\cite{zhang2021fop, guo2022learning} often decompose the global Q-value into individual local Q-values to satisfy the IGM condition. These local Q-values are then used to guide the training of local stochastic policies. 
Compared with value decomposition methods such as QMIX, they train stochastic policy networks in addition to the Q-functions, which can potentially improve multi-agent exploration in complex environments.
\textbf{However, while the value decomposition ensures alignment between global and local Q-values via the IGM condition, there is typically no explicit constraints imposed between the local policies and the local Q-values during the policy training phase. This lack of coupling often leads to misalignment during testing, where the local policies fail to consistently select joint actions that maximize the global Q-value, defeating the purpose of enforcing the IGM condition.} Consequently, the learned policies may struggle to achieve optimal global coordination during execution.


\begin{figure}[t]
\centering
\includegraphics[width=0.35\textwidth]{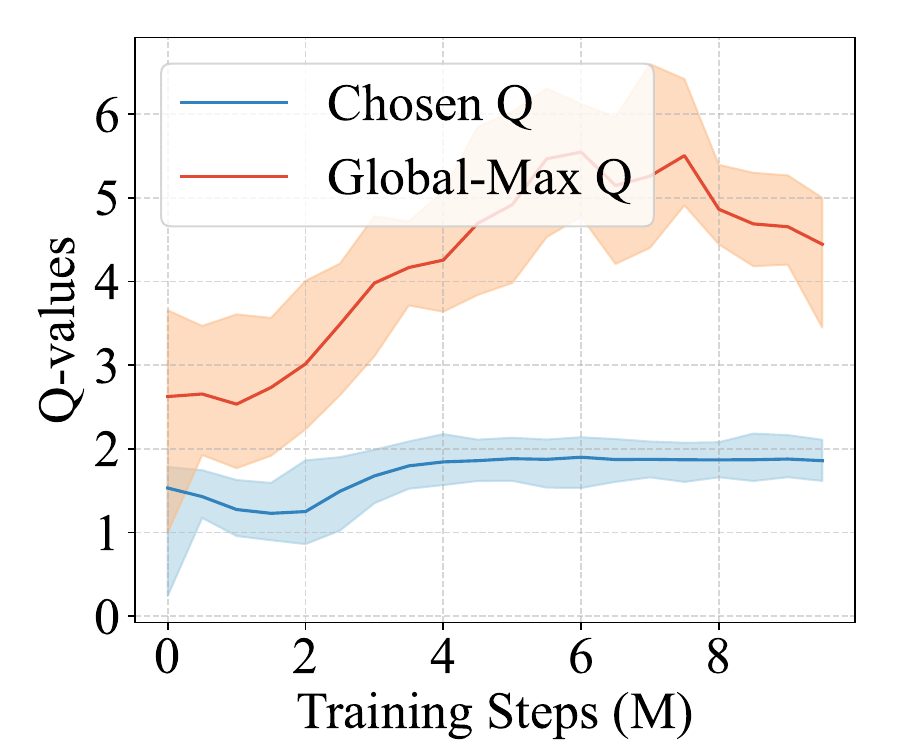}
    \caption{Illustration of misalignment between local policies and the maximum global Q-value. When naively combining the IGM condition with maximum entropy MARL, local policies often select suboptimal joint actions, leading to a lower Q-value, depicted by the blue curve. 
    In contrast, the optimal joint action should achieve the global-max Q-value, shown as the orange curve. 
    }
    \Description{Illustration of misalignment}
    \label{fig:Q_delta}
\end{figure}

\noindent\textbf{Empirical Demonstration:} To analyze the performance gap caused by the misalignment between local policies and the maximum global Q-value, we evaluate the difference between the maximum global Q-value and the global Q-value selected by local policies in the SMAC-v2 \textit{zerg 5vs5} scenario, using a state-of-the-art Maximum Entropy MARL algorithm~\citep{zhang2021fop}. Figure~\ref{fig:Q_delta} shows the results, averaged over five random seeds with standard deviations included. 
We can observe that there is a significant Q-value gap, emphasizing the potential for improvement by developing algorithms that enable the selection of joint actions with the highest global Q-value through local stochastic policies.


\section{Order-Preserving Transformation}

Here, we propose a novel order-preserving transformation (OPT) to solve the aforementioned misalignment problem between local policies and the maximum global Q-value. 

\subsection{Formulation of OPT}

OPT is used to map local Q-values to the logits of local policies while preserving their relative order. 
Formally, the OPT is defined as: 
\begin{equation}
\label{eq:order_preserving_transformation}
\begin{aligned}
\text{OPT}(x,s) = W^2\sigma(W^1x+b^1)+b^2
\end{aligned}
\end{equation}
Here, \(W^1_{_{(d_1\times 1)}}\), \(W_{_{(1\times d_1)}}^2\), \(b^1_{_{(d_1\times 1)}}\), and \(b^2_{_{(1\times 1)}}\) are generated by a hyper-network\footnote{For details of the hyper-network, please refer to the Appendix.} which takes the global state $s$ as input. 
Consider \( x \) as a local Q-value of dimension \( (1 \times d_2) \), then \( W^1x + b^1 \) is a set of hidden variables of size \( (d_1 \times d_2) \).
Notably, when all entries in \( W^1 \) are non-negative, the relative order of elements in \( x \) is preserved in each of the $d_1$ hidden variables.  
Next, a nonlinear activation function \( \sigma \), such as ELU~\cite{clevert2015fast}, is applied, followed by another order-preserving linear transformation using \( W^2 \) and \( b^2 \). Consequently, the final output, \( \text{OPT}(x, s) \), maintains both the dimension and the relative order of the entries in the input \( x \). 

$\text{OPT}(x,s)$ is used as the logits of the local policy corresponding to the local Q-value $x$. Through aforementioned order-preserving design, each agent can select its action corresponding to the highest local policy logit, which is also the maximum point of its local Q-function. The local Q functions can be learned with any value decomposition methods satisfying the IGM condition, to ensure that the selected local actions collectively compose the optimal joint action that maximize the global Q-function.
Thus, the proposed OPT addresses the aforementioned misalignment issue commonly found in maximum entropy MARL.

\subsection{Training Scheme of OPT}
\label{sec:train_opt}

\begin{figure}[h!]
\centering
\includegraphics[width=0.35\textwidth]{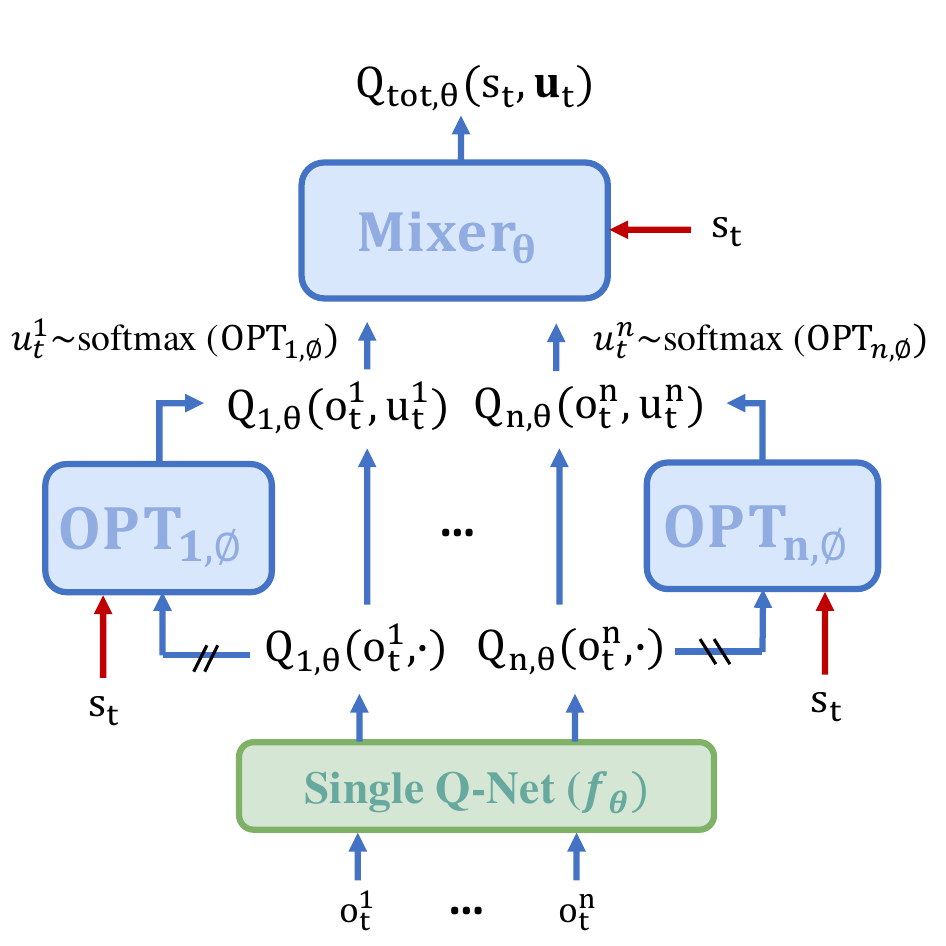}
\caption{The overall pipeline of ME-IGM.}
\Description{The overall pipeline of ME-IGM.}
    \label{fig:network}
\end{figure}

Here, we first formulate the ideal objective of Maximum Entropy MARL and its practical counterpart that admits an analytical-form solution. We then analyze the gap between these two objective functions, providing a theoretically sound objective design for Maximum Entropy MARL based on OPTs.

Maximum Entropy MARL trains the individual and global Q-functions as in value decomposition MARL methods, while training a policy network in a Maximum Entropy RL framework based on the global Q-values. Formally, the joint policy is trained as follows:
\begin{equation}
\label{eq:kl_divergence}
\underset{\pi'_{jt}\in\Pi}{\mathrm{min}}
\ D_{KL} \left( \pi'_{jt}(\cdot | s_t) \middle\| \frac{\exp(Q_{tot}^{\pi_{\text{old}}}(s_t, \cdot)/\alpha)}{Z^{\pi_{\text{old}}}(s_t)} \right),
\end{equation}
where \(Z^{\pi_{\text{old}}}(s_t) = \sum_\mathbf{u}\exp(Q_{tot}^{\pi_{\text{old}}}(s_t, \mathbf{u})/\alpha)
\) and $\alpha$ is the temperature in the softmax distribution. 

To simplify and decentralize the policy improvement process, we propose replacing \( Q_{tot}^{\pi_{\text{old}}}(s_t, \cdot) \) with \( \sum_i \text{OPT}_i(Q_i^{\pi_{\text{old}}}(o^i_t, \cdot), s_t) \), which leads to:
\begin{equation}
\underset{\pi'_{jt}\in\Pi}{\mathrm{min}}
\ D_{KL} \left( \pi'_{jt}(\cdot | s_t) \middle\| \frac{\exp(\sum_i\frac{\text{OPT}_i(Q_i^{\pi_{\text{old}}}(o^i_t, \cdot), s_t)}{\alpha})}{Z'^{\pi_{\text{old}}}(s_t)} \right),
\label{eq:real_policy_improvement}
\end{equation}
where \(Z'^{\pi_{\text{old}}}(s_t)\) is the normalization factor.
The joint policy $\pi'_{jt}$ is implemented as a set of independent local policies for decentralized execution, and we have the analytical solution for the local policy of agent $i$ as:
\begin{equation}
\pi_i^{\text{new}}(a|o^i_t) \propto  \exp(\text{OPT}_i(Q_i^{\pi_{\text{old}}}(o^i_t, a), s_t) / \alpha).
\label{eq:policy_format}
\end{equation}
We can see that the policy is defined based on the local Q-value and utilizes the OPT to ensure that the relative order of policy logits aligns with the local Q-values.

\textbf{There is a gap between $\sum_i\text{OPT}_i(Q_i^{\pi_{\text{old}}}(o^i_t, \cdot), s_t)$ and $Q_{tot}^{\pi_{\text{old}}}(s_t, \cdot)$.} Also, we find that the policy improvement using Equation (\ref{eq:real_policy_improvement}) no longer guarantees a monotonic increase in the global Q-value, unlike the approach based on Equation (\ref{eq:kl_divergence}). Specifically, we have the following theorem:
\begin{theorem}
\label{thm:policy_improvement_gap}
Denote the policy before the policy improvement as \(\pi_{\text{old}}\) and the policy achieving the optimality of the improvement step as \(\pi_{\text{new}}\).
Updating the policy with Equation (\ref{eq:kl_divergence}) ensures monotonic improvement in the global Q-value. That is, $Q^{\pi_{\text{old}}}_{tot}(s_t, \mathbf{u_t}) \leq Q_{tot}^{\pi_{\text{new}}}(s_t, \mathbf{u_t}), \forall s_t,\mathbf{u_t}.$

\begin{algorithm}[ht]
\caption{ME-IGM}
\label{alg:soft-QMix}
\begin{algorithmic}[1]
\STATE \textbf{Input:} $\theta, \phi$
\STATE \textbf{Initiate:} ${\theta^-} \gets \theta, D \gets \emptyset$ 
\FOR{each iteration}
    \FOR{each environment step}
        \STATE get $\pi_i(u^i_t|o^i_t,s_t)$ via Eq.~\ref{eq:policyi}
        \STATE $u^i_t \sim \pi_{i}(u^i_t|o^i_t,s_t), \forall i\in\{1,..,n\}$ 
        \STATE $s_{t+1} \sim \mathcal{P}(s_t, \mathbf{u}_t, \cdot)$ 
        \STATE $D \gets D \cup \{(s_t, \mathbf{u}_t, r(s_t, \mathbf{u}_t), s_{t+1})\}$ 
    \ENDFOR
    \FOR{each gradient step}
        \STATE calculate $\mathcal{T}^{\pi_{jt}}_\lambda Q_{tot}$  via Eq.
        \STATE $\theta \gets \theta - \lambda_\theta \nabla_{\theta} L(\theta)$
        \STATE $\phi \gets \phi - \lambda_\phi \nabla_\phi L(\phi)$
        \STATE $\omega \gets \omega - \lambda_\omega \nabla_\omega L(\omega)$
        \STATE ${\theta^-} \gets \tau\theta + (1 - \tau){\theta^-}$
    \ENDFOR
\ENDFOR
\end{algorithmic}
\end{algorithm}

In contrast, updating the policy with Equation (\ref{eq:real_policy_improvement}) only ensures that $Q^{\pi_{\text{old}}}_{tot}(s_t, \mathbf{u_t}) -\epsilon
\leq Q_{tot}^{\pi_{\text{new}}}(s_t, \mathbf{u_t}), \forall s_t,\mathbf{u_t}$.
The gap $\epsilon$ satisfies:
\begin{equation}
\begin{aligned} \label{equ:4}
\epsilon \leq \frac{\gamma}{1-\gamma}\mathbb{E}_{s_{t+1}\sim \mathcal{P}}\big[C\sqrt{2 D_{KL}\big(\pi_{\text{old}}(\cdot|s_{t+1})|\pi_{\text{new}}(\cdot|s_{t+1})\big)}\big]
\end{aligned}
\end{equation}
where $\gamma$ is the discount factor, $\mathcal{P}(s_t, \mathbf{u_t}, \cdot)$ is the transition function, $C  = \underset{\mathbf{u}_{t+1}}{\mathrm{max}}\ \Delta_Q(s_{t+1}, \mathbf{u}_{t+1})$, and $ \Delta_Q(s_{t+1}, \mathbf{u}_{t+1})\triangleq |Q_{tot}^{\pi_{\text{old}}}(s_{t+1}, \mathbf{u}_{t+1}) - $ $\sum_i \text{OPT}_i(Q_i^{\pi_{\text{old}}}(o^i_{t+1}, u^i_{t+1}), s_{t+1})|$.
\end{theorem}

The proof is provided in Appendix~\ref{ap:proof_epsilon}, where we also analyze how this error affects the general convergence properties of policy improvement.
This motivates us to minimize the factor \(C\) and adopt the following objective to train the OPTs.
\begin{equation}
\begin{aligned} 
\label{equ:delta_q_objetive}
\min_{{\text{OPT}_{1:n}}} \mathbb{E}_{s, \mathbf{u}\sim \rho_{\pi_{jt}^{new}}}[\Delta_Q(s, \mathbf{u})^2],
\end{aligned}
\end{equation}
where $\pi_{jt}^{new} = (\pi_{1}^{new}, \cdots, \pi_{n}^{new})$ and $\pi_{i}^{new}$ is defined in Eq. (\ref{eq:policy_format}).

\begin{table}[t]
    \caption{(a) presents the payoff matrix for a one-step matrix game involving two agents: the row player and the column player. Each agent has three possible actions: \(\{\text{A, B, C}\}\). The rewards for each joint action are given in a \(3 \times 3\) matrix. This payoff structure is non-monotonic, as the optimal action for one player depends on the action chosen by its teammate. (b)–(d) illustrate the results of ME-QMIX, FOP, and QMIX, respectively. The first row and first column display the policies adopted by the row and column players after \(10k\) training steps. The \(3 \times 3\) matrix in the bottom right corner represents the global Q-function \( Q_{\text{tot}} \) learned by each algorithm. We can see that ME-QMIX is the only method that assigns the highest probability to selecting the optimal joint action, highlighting its effectiveness in overcoming the misalignment issue and achieving optimal coordination. $\epsilon$ in (d) denotes the exploration rate used in $\epsilon$-greedy exploration.}
    \begin{minipage}{.48\linewidth}
      \centering
      \subcaption{Payoff Matrix}
        \begin{tabular}{|c||c|c|c|}
        \hline
        & A & B & C\\ \hline \hline
        A & \textbf{8}& -12 & -12 \\ \hline
        B & -12 & 0 & 0 \\ \hline
        C & -12 & 0 & 0 \\ \hline
        \end{tabular}
        \label{table:game1_payoff}
    \end{minipage}
    \begin{minipage}{.48\linewidth}
      \centering
      \subcaption{Result of ME-QMIX (Ours)}
        \begin{tabular}{|c||c|c|c|}
        \hline
        & 1. & 0. & 0.\\ \hline \hline 
        1. & \textbf{8}& -12 & -14 \\ \hline
        0. & -8 & -28 & -30 \\ \hline
        0. & -10 & -30 & -32 \\ \hline
        \end{tabular}
        \label{table:game1_softQMIX}
    \end{minipage}\\
    \begin{minipage}{.48\linewidth}
      \centering
        \subcaption{Result of FOP}
        \begin{tabular}{|c||c|c|c|}
        \hline
        & .21 & .39 & .40\\ \hline \hline
        .21 & \textbf{8}& -12 & -12 \\ \hline
        .40 & -12 & 0 & 0 \\ \hline
        .39 & -12 & 0 & 0 \\ \hline
        \end{tabular}
        \label{table:game1_FOP}
    \end{minipage}
    \begin{minipage}{.48\linewidth}
      \centering
        \subcaption{Result of QMIX}
        \begin{tabular}{|c||c|c|c|}
        \hline
        & \(\epsilon\) & 1-\(2\epsilon\) & \(\epsilon\)\\ \hline \hline
        \(\epsilon\) & -8.5 & -8.5 & -8.5 \\ \hline
        1-\(2\epsilon\) & -8.5 & \textbf{0.2} & 0.2 \\ \hline
        \(\epsilon\) & -8.5 & 0.2 & 0.2 \\ \hline
        \end{tabular}
        \label{table:game1_QMIX}
    \end{minipage}
\label{tb:matrix_game1}
\end{table}

\section{The Overall Framework: ME-IGM}

Figure~\ref{fig:network} illustrates the framework of ME-IGM, which consists of three types of networks. Agents 1 to \(n\) share a network \(f_\theta\) for predicting local Q-values. The Mixer network, which aggregates local Q-values and manages the credit assignment among agents, is parameterized through its associated hyper-network as \(\text{Mixer}_\theta\)\footnote{Our algorithm is compatible with any Mixer network design that satisfies the IGM condition. For instance, during evaluation, we utilize the Mixer networks of QMIX and QPLEX, leading to two variants of our algorithm: ME-QMIX and ME-QPLEX.}. \([\text{OPT}_{i, \phi}]_{i=1}^n\) are \(n\) order-preserving transformations, and  $\text{OPT}_{i, \phi}(f_\theta(o_t^i), s_t)$ defines a local stochastic policy for agent $i$.
The update rules for these networks are described as follows.

ME-IGM optimizes two loss functions to update \(\theta\) and \(\phi\) separately. \(\text{Mixer}_\theta\) and \(f_\theta\) are trained by minimizing \(L(\theta)\):
\begin{equation}
\label{eq:LQ}
\mathbb{E}_{\tau_t} \left[\frac{1}{2}\left(Q_{tot_\theta}(s_t, \mathbf{u}_t) -\widehat{\mathcal{T}}^{\pi_{jt}} Q_{tot}(s_t, \mathbf{u}_t)\right)^2\right]
\end{equation}
where the variables are defined as:
\begin{equation*}
    \tau_t = \{o_{t:t+1}^{1:n},\; u_{t:t+1}^{1:n},\; s_{t:t+1},\; r_{t}\}
\end{equation*}
\begin{equation*}
    Q_{tot_\theta}(\mathbf{u},s)
        = \text{Mixer}_\theta\big(f_\theta(o^1,u^1), \ldots, f_\theta(o^n,u^n), s\big)
\end{equation*}
\text{and the estimated Q-target is:}
\begin{equation*}
    \widehat{\mathcal{T}}^{\pi_{jt}}Q_{tot}(s_t, \mathbf{u}_t)
        = r_t + \gamma Q_{tot,\theta^-}(s_{t+1}, \mathbf{u}_{t+1})
\end{equation*}
As a common practice, we utilize a target Q network $Q_{tot,\theta^-}$ to stabilize training, with \(\theta^-\) being the exponentially weighted moving average of \(\theta\). Notably, local actions \( u_{t:t+1}^{1:n} \) are sampled from local stochastic policies \( \pi_i \) (as defined in Equation (\ref{eq:policyi})) during roll-out, rather than selecting the argmax of local Q-functions. This offers two advantages: (1) the softmax policy function enhances exploration, and (2) incorporating state \( s \) in the sampling process allows for better utilization of global information during centralized training.

Further, to train \([\text{OPT}_{i, \phi}]_{i=1}^n\), we adopt a sample-based version of Equation (\ref{equ:delta_q_objetive}), denoted as $L(\phi)$:
\begin{equation}
\label{eq:Lpi}
\mathbb{E}_{\tau_t}\left[\left(\sum_{i=1}^n \text{OPT}_{i,\phi}(f_{\theta}(o^i_t,u^i_t), s_t)- Q_{tot,\theta}(s_t, \mathbf{u}_t)\right)^2\right]
\end{equation}

OPTs define the local policies for each agent as follows:
\begin{equation}
\label{eq:policyi}
\pi_i(u^i|o^i,s) = \text{softmax}\left(\text{OPT}_{i,\phi}\left(f_\theta(o^i),s\right)/\alpha_\omega\right)
\end{equation}
$\alpha_\omega$ is the temperature and, similar to~\citet{haarnoja2018soft}, can be updated by minimizing the following loss function:
\begin{equation}
L(\omega) = \mathbb{E}_{\tau_t} 
\left[ -\alpha_\omega \log \pi_{jt}(\mathbf{u}_t | s_t) - \alpha_\omega \bar{\mathcal{H}} \right],
\end{equation}
where $\bar{\mathcal{H}}$ is a predefined target entropy. Note that the local policies (i.e., $\pi_i$) are only used to collect training roll-outs $\tau_t$. During execution, we would select the argmax of \( f_\theta \), which does not require any centralized information (e.g., $s$), since OPTs preserve the order of the input vectors. The pseudo code of ME-IGM algorithm is presented in Algorithm~\ref{alg:soft-QMix}.

\section{Experiments}

In this section, we first evaluate our method on a classic matrix game, demonstrating its capability to learn a globally optimal policy despite the non-monotonic nature of the overall payoff matrix. Next, we compare our algorithm against a set of baselines on the SMAC-v2 and Overcooked benchmark, highlighting its state-of-the-art performance in cooperative MARL. Moreover, we conduct ablation studies to evaluate the benefits of entropy-driven exploration, assess the necessity of incorporating the order-preserving transformation, and analyze the impact of key hyperparameters.

\begin{figure}[t]
\centering
\includegraphics[width=0.4\textwidth]{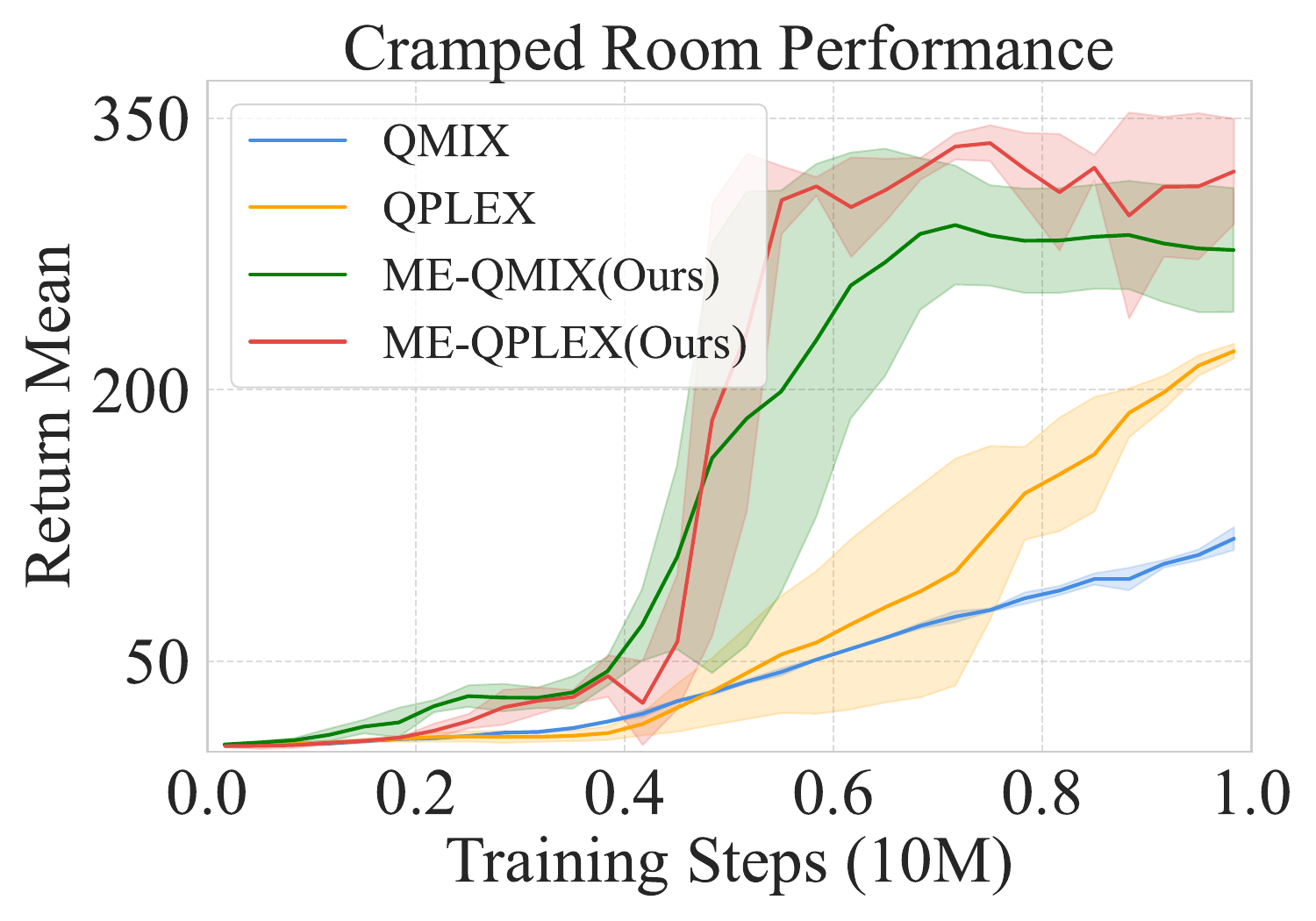}
\includegraphics[width=0.4\textwidth]{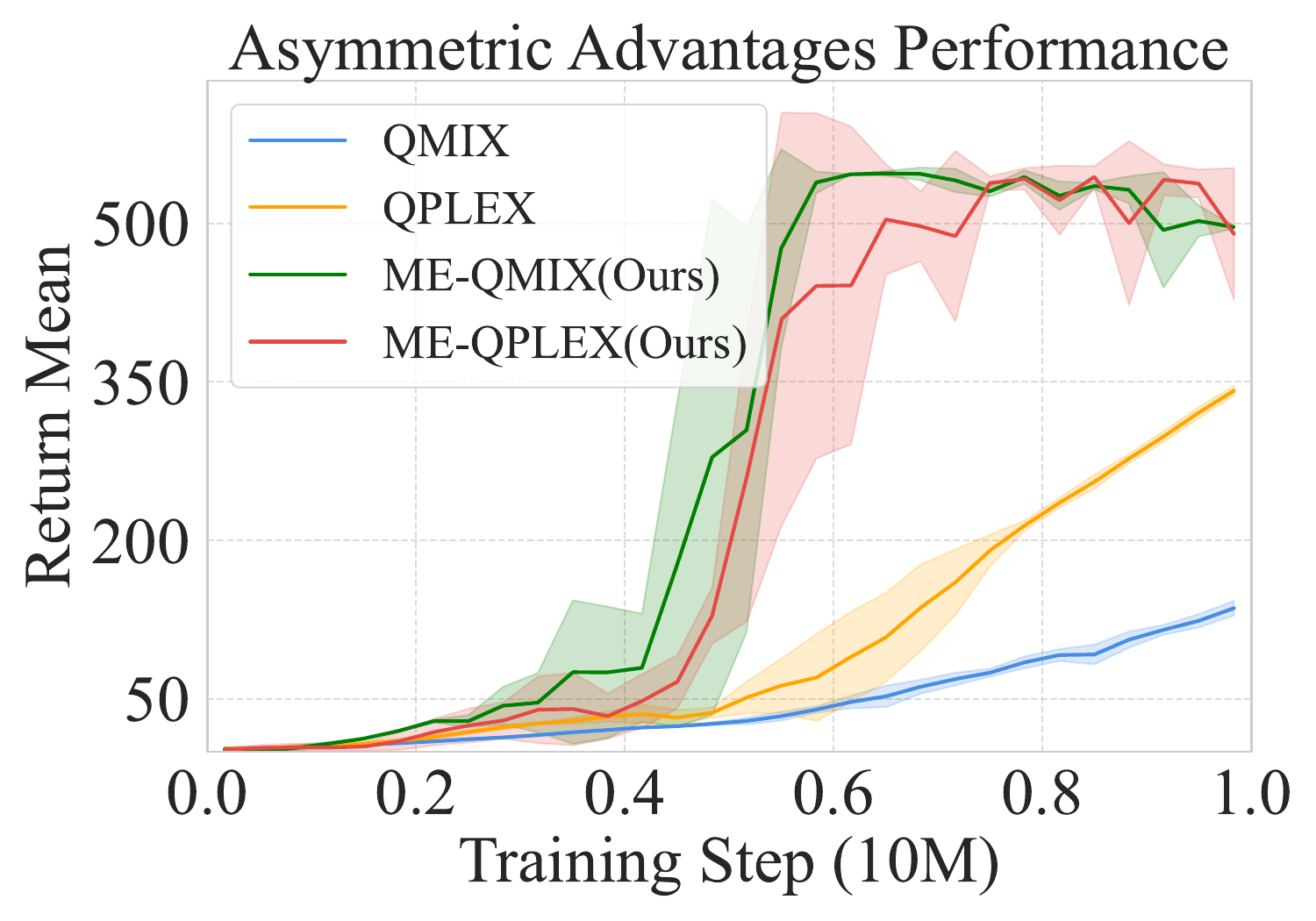}
\caption{The mean return and standard deviation of QMIX, QPLEX, ME-QMIX , ME-QPLEX in Overcooked. It shows that ME-QMIX and ME-QPLEX achieve higher returns and exhibit faster convergence, compared to QMIX and QPLEX which do not adopt maximum entropy.}
\Description{The mean return and standard deviation of QMIX, QPLEX, ME-QMIX , ME-QPLEX in Overcooked.}
\label{fig:cramped_room}
\end{figure}

\begin{table*}[t]
\centering
\caption{
Average win rates of different algorithms on SMAC-v2.
The results we recorded demonstrate that our method, ME-IGM, outperforms all previous baseline algorithms and achieves performance comparable to IL+MARL methods. Note that the results for MAPPO, IPPO, QMIX, QPLEX, IMAX-PPO, and InQ are cited directly from \citet{bui2024mimicking} as mean values; therefore, no standard deviations are reported for these baselines.}
\begin{tabular}{@{}llcccccccc|ccc@{}}
\toprule
\multicolumn{2}{c}{} & \multicolumn{6}{c}{\textbf{MARL}} & \multicolumn{2}{c}{\textbf{ME-IGM (Ours)}} & \multicolumn{2}{c}{\textbf{IL+MARL}} \\ \cmidrule(lr){3-8} \cmidrule(lr){9-10} \cmidrule(lr){11-12}
Task      & Scenario  & MAPPO & IPPO & QMIX & QPLEX & HASAC & FOP & ME-QMIX & ME-QPLEX & IMAX-PPO & InQ \\ \midrule
\multirow{5}{*}{Protoss} 
& 5\_vs\_5   & 58.0 & 54.6 & \underline{70.2} & 53.3 & 22.9 & 37.8 & \textbf{75.9$\pm$3.0} & 69.8$\pm$1.8 & 68.1 & 78.7 \\
& 10\_vs\_10 & 58.3 & 58.0 & 69.0 & 53.7 & 12.1 & 8.6 & \underline{78.5$\pm$2.7} & \textbf{78.7$\pm$3.4} & 59.6 & 79.8 \\
& 10\_vs\_11 & 18.2 & 20.3 & 42.5 & 22.8 & 5.7 & 0.4 & \textbf{50.4$\pm$2.9} & \underline{44.9$\pm$3.2} & 21.3 & 48.7 \\
& 20\_vs\_20 & 38.1 & 44.5 & \underline{69.7} & 27.2 & - & 2.0 & \textbf{73.9$\pm$3.0} & 38.3$\pm$4.0 & 76.3 & 80.6 \\
& 20\_vs\_23 &  5.1 &  4.1 & 16.5 &  4.8 & - & 0.3 & \textbf{23.4$\pm$1.0} & \underline{18.7$\pm$1.3} & 11.8 & 24.2 \\ \midrule

\multirow{5}{*}{Terran} 
& 5\_vs\_5   & 52.0 & 56.2 & 58.4 & 70.0 & 37.9 & 57.1 & \textbf{70.2$\pm$2.6} & \underline{70.0$\pm$4.8} & 53.3 & 69.9 \\
& 10\_vs\_10 & 58.1 & 57.3 & 65.8 & 66.1 & 20.4 & 15.8 & \textbf{72.6$\pm$1.8} & \underline{69.4$\pm$0.2} & 58.4 & 72.2 \\
& 10\_vs\_11 & 28.6 & 31.0 & 39.4 & 41.4 & 9.1 & 7.5 & \underline{49.6$\pm$1.6} & \textbf{51.0$\pm$0.4} & 28.4 & 53.9 \\
& 20\_vs\_20 & 52.8 & 49.6 & \underline{57.6} & 23.9 & - & 0.2 & \textbf{59.3$\pm$1.3} & 45.1$\pm$7.2 & 35.9 & 65.4 \\
& 20\_vs\_23 & 11.2 & 10.0 & 10.0 &  7.0 & - & 0.0 & \textbf{18.7$\pm$0.7} & 17.4$\pm$2.9 &  4.7 & 17.7 \\ \midrule

\multirow{5}{*}{Zerg} 
& 5\_vs\_5   & 41.0 & 37.2 & 37.2 & 47.8 & 29.1 & 35.2 & \underline{50.8$\pm$4.6} & \textbf{52$\pm$3.9} & 48.6 & 55.0 \\
& 10\_vs\_10 & 39.1 & \textbf{49.4} & 40.8 & 41.6 & 17.9 & 8.0 & \underline{49.1$\pm$0.6} & 42.9$\pm$1.8 & 50.6 & 57.6 \\
& 10\_vs\_11 & 31.2 & 26.0 & 28.0 & 31.1 & 14.0 & 1.5 & 32.6$\pm$1.7 & \textbf{45.3$\pm$0.6} & \underline{34.8} & 41.5 \\
& 20\_vs\_20 & 31.9 & 31.2 & 30.4 & 15.8 & - & 0.2 & \underline{34.3$\pm$5.2} & \textbf{42.1$\pm$5.1} & 26.7 & 43.3 \\
& 20\_vs\_23 & \underline{15.8} &  8.3 & 10.1 &  6.7 & - & 0.2 & 14.8$\pm$6.4 & \textbf{17.5$\pm$3.2} &  8.2 & 21.3 \\ 
\bottomrule

\hline
\end{tabular}

\label{tb:smac-v2_results}
\end{table*}

\subsection{Evaluation on Matrix Games}

We first demonstrate the effectiveness of our algorithm in a classic one-step matrix game, shown as Table~\ref{tb:matrix_game1}.  It is a non-monotonic payoff matrix commonly used in previous studies~\cite{son2019qtran,wang2020qplex}. 
We evaluate ME-QMIX, a variant of ME-IGM built upon QMIX, by comparing its performance against two state-of-the-art MARL methods: QMIX~\citep{rashid2020monotonic}, a value decomposition MARL method, and FOP~\citep{zhang2021fop}, a Maximum Entropy MARL method.
We are concerned with whether the joint action with the highest selection probability corresponds to the globally maximal Q value, ensuring optimal returns when executing a deterministic policy.


As shown in Table~\ref{tb:matrix_game1}, QMIX suffers from high estimation error in payoff values of this non-monotonic game, due to its monotonicity constraints (as introduced in Section \ref{rws}). Its deterministic policy converges to a suboptimal joint action (B, B) with a reward of 0. FOP, which employs a network architecture similar to QPLEX, is capable of accurately estimating value functions (i.e., the payoff). However, its policy tends to select suboptimal joint actions more frequently due to the misalignment between local policies and the maximum global Q-values, as discussed in Section \ref{misalignment}. 
This observation highlights that the misalignment between learned policies and optimal joint actions is not confined to IGM-based approaches. 
ME-QMIX exhibits smaller estimation errors for the payoff of optimal joint actions (which is our primary concern) while having larger errors for suboptimal ones. Crucially, the joint action with the maximum Q-value in ME-QMIX aligns with the true optimal joint action, and the local policy assigns the highest probability to selecting this optimal joint action, avoiding the misalignment issue through the use of OPTs.

\subsection{Evaluation on Overcooked}

To demonstrate the advantage of employing the principle of maximum entropy, we compare ME-IGM with two leading value decomposition MARL methods (which do not involve maximum entropy): QMIX and QPLEX, on the Overcooked benchmark~\cite{carroll2019utility} -- a widely used coordination-focused multi-agent environment. In particular, we evaluate them in two representative layouts: cramped room and asymmetric advantages. 
These scenarios pose distinct challenges, including close collaboration under spatial constraints and asymmetric advantages in task division, and require coordinated exploration.
For each scenario, we conduct repeated experiments with different random seeds to get the training curves of 10 million environment steps. As shown in Figure \ref{fig:cramped_room}, ME-QMIX and ME-QPLEX exhibit significantly faster convergence and achieve higher performance levels in the early stages of training compared to their counterparts, QMIX and QPLEX.


\subsection{Evaluation on SMAC-v2}

We further conduct a comprehensive evaluation of ME-IGM on the SMACv2~\cite{ellis2022smacv2} benchmark across 15 scenarios, comparing its performance against 8 baselines. The results, as shown in Table~\ref{tb:smac-v2_results}, are obtained by training each algorithm on each scenario for \(10\) millions of environment steps. Each value in the table represents the average performance over three runs with different random seeds. ME-IGM outperforms the baseline algorithms across all scenarios by a significant margin. MAPPO and IPPO, as on-policy algorithms, have lower sampling efficiency, resulting in lower win rates within \(10\)M environment steps. QMIX and QPLEX, lacking exploration, tend to converge to local optimum. HASAC~\citep{liu2023maximum} and FOP perform poorly across all scenarios. 
Surprisingly, ME-IGM achieves performance on par with (or even surpassing) IL+MARL methods, despite not relying on expert opponent demonstrations or opponent modeling. The results of MAPPO, IPPO, QMIX, QPLEX, IMAX-PPO, InQ are taken from~\cite{bui2024mimicking}. A detailed comparison with actor-critic baselines is presented in the Appendix.


\subsection{Ablation Study}
\label{ap:ablation_study}

\begin{figure}[h!]
\centering
\includegraphics[width=0.35\textwidth]{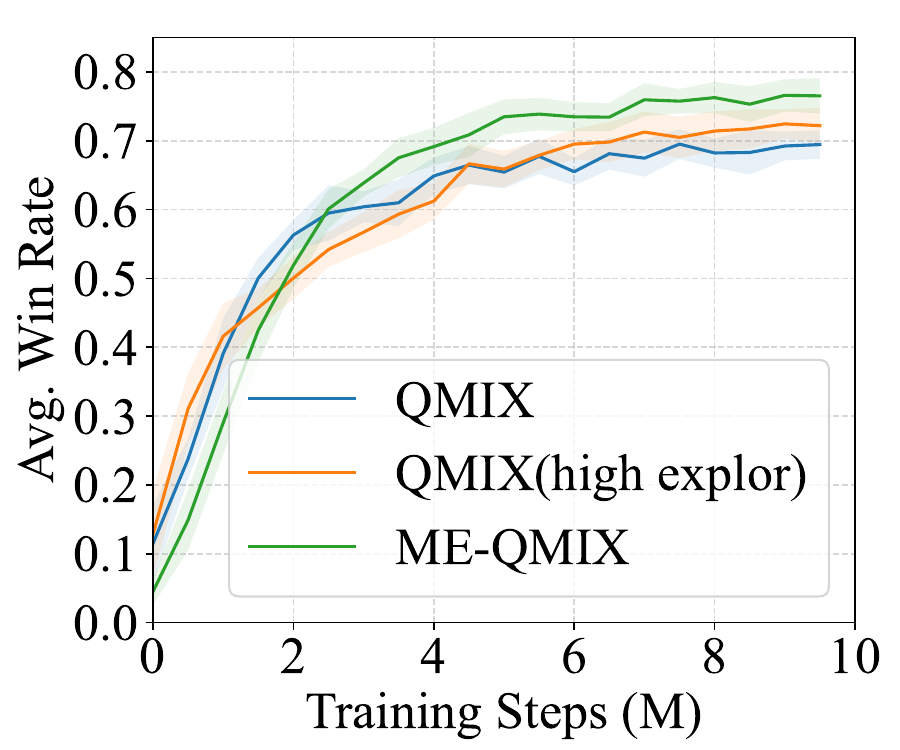}
\caption{Ablation study on exploration strategies in QMIX. Comparison between ME-QMIX and a modified QMIX with extended epsilon annealing on the Protoss 5v5 map. The results show that simply increasing epsilon-greedy exploration is insufficient for improving performance. In contrast, ME-QMIX enables QMIX agents to perform more structured exploration through the maximum entropy framework, leading to higher returns.}
\Description{Comparison of exploration strategies in CTDE MARL}
    \label{fig:qmix_exploration}
\end{figure}

\textbf{Benefits of entropy-driven exploration:} To further demonstrate the benefits of using the maximum entropy framework for exploration, we compare ME-QMIX with a modified version of QMIX, which incorporates significantly higher exploration by applying an extended epsilon annealing period in the epsilon-greedy method, on Protoss 5v5. Figure~\ref{fig:qmix_exploration} illustrates that simply increasing the epsilon annealing time for exploration does not improve QMIX's performance. ME-QMIX achieves superior performance, with the systematic exploration enabled by the maximum entropy framework.

\begin{table}[t]
\caption{Our algorithm (+OPT) vs. its two ablated versions: +Entropy, obtained by removing OPTs, and QMIX, obtained by further removing softmax local policies. Results on three SMAC-v2 scenarios show that removing OPTs (+Entropy) leads to lower win rates due to misalignment between local and global objectives, while incorporating OPTs ensures consistent and monotonic policy improvement.}
\centering
\begin{tabular}{lcccc}
\hline
\textbf{} & \textbf{QMIX} & \textbf{+Entropy} & \textbf{+\text{OPT}} \\
\hline
\textbf{Protoss 5vs5} & 0.68 $\pm$ 0.04 & 0.73 $\pm$ 0.05 & 0.74 $\pm$ 0.04  \\ 
\textbf{Terran 5vs5} & 0.68 $\pm$ 0.05 & 0.67 $\pm$ 0.04 & 0.70 $\pm$ 0.05  \\ 
\textbf{Zerg 5vs5} & 0.41 $\pm$ 0.05 & 0.40 $\pm$ 0.04 & 0.48 $\pm$ 0.05  \\ \hline
\end{tabular}
\label{tab:instruction_tuning}
\end{table}

\noindent\textbf{The necessity of using OPTs}: In Table \ref{tab:instruction_tuning}, we compare QMIX with its two variants: +Entropy and +OPT. Both variants learn local stochastic policies based on Q-functions trained with the QMIX objective. Specifically, +OPT corresponds to our proposed ME-IGM, while +Entropy removes the OPTs from ME-IGM (i.e., no OPTs in Equations (\ref{eq:real_policy_improvement}), (\ref{eq:Lpi}), and (\ref{eq:policyi})). 
We test each algorithm on three SMAC-v2 scenarios across four random seeds, reporting the average win rate along with its standard deviation in the table.
Without OPTs, +Entropy suffers from the misalignment problem discussed in Section \ref{misalignment} and fails to ensure monotonic policy improvement, as analyzed in Section \ref{sec:train_opt}, 
leading to a performance decline in two out of three scenarios. 
On the other hand, +OPT achieves a substantial performance improvement across all evaluated scenarios.

\begin{figure}[t]
    \centering
    \begin{subfigure}{0.35\textwidth}
        \centering
        \includegraphics[width=\textwidth]{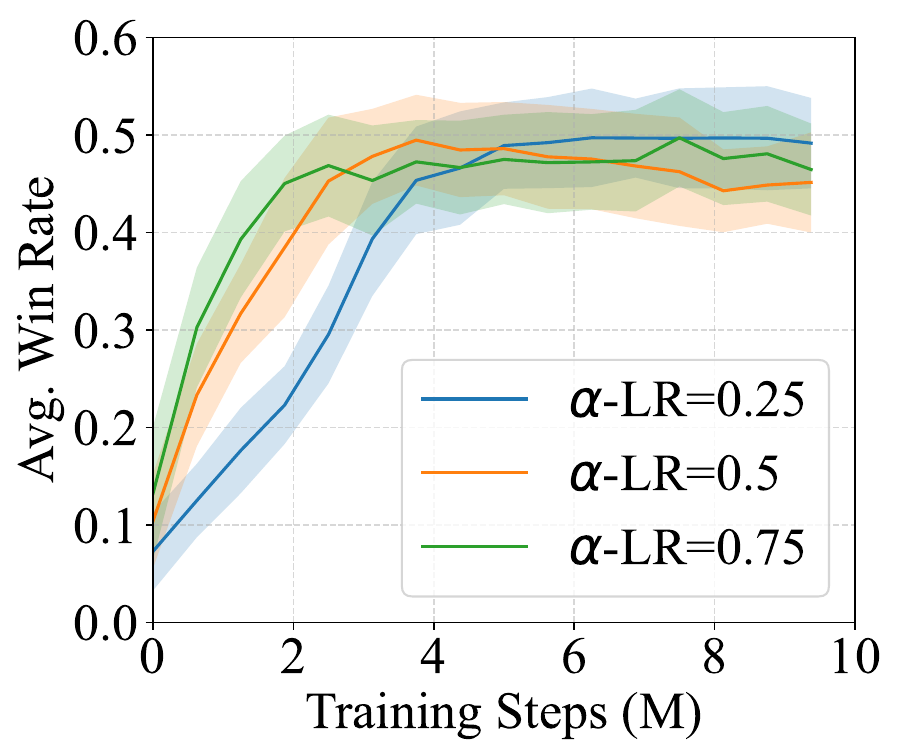}
        \caption{Impact of the learning rate for \( \alpha_\omega \)}
        \label{fig:alpha_LR}
    \end{subfigure}
    \hfill
    \begin{subfigure}{0.35\textwidth}
        \centering
        \includegraphics[width=\textwidth]{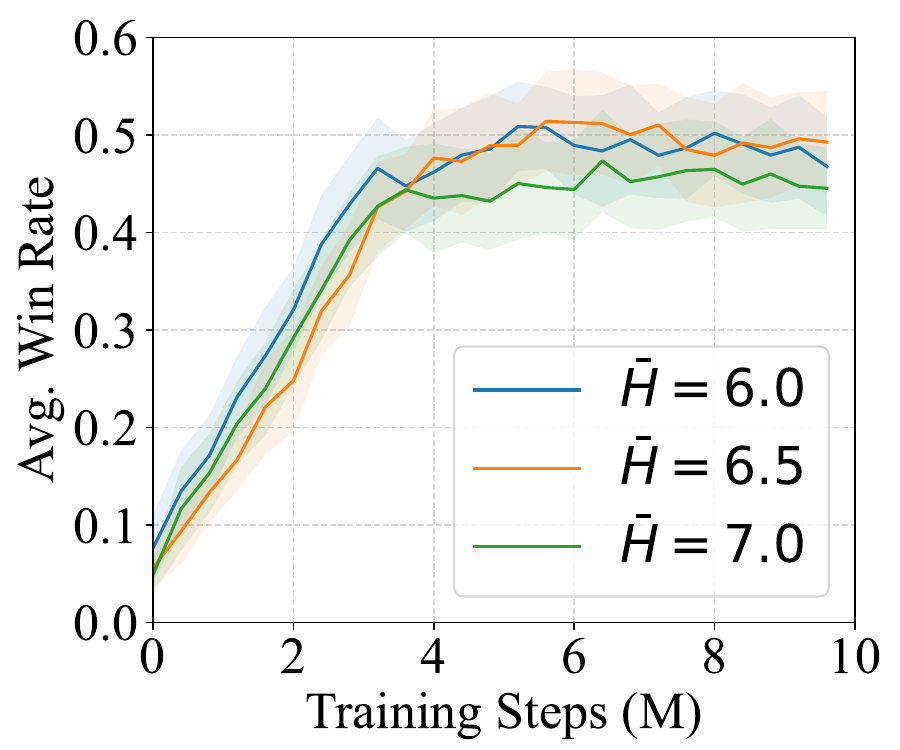}
        \caption{Impact of the target entropy \( \bar{\mathcal{H}} \)}
        \label{fig:alpha_target}
    \end{subfigure}
    \caption{The impact of different hyperparameters for training $\alpha_\omega$ on the performance of ME-QMIX. (a) A lower learning rate for $\alpha_\omega$ corresponds to increased exploration. (b) ME-QMIX's performance is not sensitive to the choice of the target entropy \( \bar{\mathcal{H}} \) in Equation (\ref{eq:policyi}).}
    \Description{The impact of different hyperparameters on the performance of ME-QMIX}
    \label{fig:alpha_hyper-para}
    \vspace{5mm}
\end{figure}

\begin{table}[t]
    \caption{Ablation study on the effect of the Order-Preserving Transformation (OPT). We compare ME-QMIX (OPT) with a variant that replaces OPT layers with a standard 3-layer MLP, which does not preserve the order of input values. Results on three SMAC-v2 scenarios show that ME-QMIX (OPT) consistently achieves higher win rates, highlighting that enforcing order preservation is essential for mitigating local–global misalignment and achieving stable policy improvement.}
    \centering
    \begin{tabular}{lccc}
    \hline
         & \textbf{ME-QMIX (MLP)} & \textbf{ME-QMIX (OPT)} \\
    \hline
    \textbf{Protoss 5v5} & $0.58 \pm 0.03$ & $0.74 \pm 0.04$ \\
    \textbf{Terran 5v5} & $0.61 \pm 0.04$ & $0.70 \pm 0.05$ \\
    \textbf{Zerg 5v5} & $0.41 \pm 0.04$ & $0.48 \pm 0.05$ \\
    \hline
    \end{tabular}
    \label{tab:w_and_wo_opt}
\end{table}

To further validate the design of the order-preserving transformation, we replace the OPT layers in ME-QMIX with a 3-layer MLP, which does not enforce non-negative weights, referring to this variant as ME-QMIX (MLP). As shown in Table \ref{tab:w_and_wo_opt}, our method, ME-QMIX (OPT), consistently outperforms ME-QMIX (MLP) across three different SMAC-v2 scenarios. This underscores the importance of the order-preserving transformation (i.e., Eq. \eqref{eq:order_preserving_transformation}) in resolving the misalignment between local policies and the maximum global Q-value, thereby enhancing overall performance.

\noindent\textbf{The impact of hyperparameters}: Compared to QMIX, ME-QMIX introduces two additional hyperparameters: the learning rate and the target entropy \( \bar{\mathcal{H}} \) for updating \( \alpha_\omega \) in Eq. (\ref{eq:policyi}). We set the other hyperparameters of ME-QMIX following the design choices of QMIX. To analyze the impact of these additional parameters, we conduct a parameter scan on the learning rate of \( \alpha_\omega \) and the target entropy \( \bar{\mathcal{H}} \).  Figure~\ref{fig:alpha_LR} illustrates that the learning rate of \( \alpha_\omega \) influences the exploration level. A lower learning rate results in higher policy entropy at the beginning of training, promoting exploration. While this initially leads to weaker performance (e.g., the blue curve in Figure~\ref{fig:alpha_LR}), it eventually achieves higher returns. Furthermore, the results in Figure~\ref{fig:alpha_target} suggest that the algorithm’s performance is not sensitive to the choice of the target entropy. Notably, in this study, we did not fine-tune configurations for each individual testing scenario. Please check the Appendix for the tables of hyperparameters.

\section{Conclusion}

In this paper, we introduce ME-IGM, a novel maximum entropy MARL approach that applies the CTDE framework and satisfies the IGM condition. We identify and address the misalignment problem between local policies and the maximum global Q-value (in maximum entropy MARL) by introducing an order-preserving transformation (OPT). We further propose a theoretically sound and straightforward objective for updating these transformation operators. Empirically, we demonstrate the effectiveness of ME-IGM on matrix games and its state-of-the-art performance on the challenging Overcooked and SMAC-v2 benchmarks. However, as a value-based method, ME-IGM is currently limited to tasks with discrete action spaces. In future work, we plan to extend its use to tasks involving continuous action spaces.



\balance
\bibliographystyle{ACM-Reference-Format} 
\bibliography{ref}
\clearpage

\appendix
\onecolumn

\clearpage

\section*{Acknowledgment}

This work was supported in part by the U.S. Army Futures Command under Contract No. W519TC-23-C-0030.

\section{TD(\(\lambda\))}
\label{ap:td-lambda}

In this work we use TD(\(\lambda\)) to update the Q-value.
In this paragraph, we will derive the expression of TD(\(\lambda\)) in the context of Maximum Entropy Reinforcement Learning. Specifically, we aim to demonstrate the derivation of the following equation:

\begin{equation}
\label{eq:td_lambda}
\begin{aligned}
\mathcal{T}^{\pi_{jt}}_\lambda Q_{tot}(s_t, \mathbf{u_t}) - Q_{tot}(s_t, \mathbf{u_t})
=
\sum_{l=0}^{\infty} (\gamma \lambda)^l \delta^V_{t+l}
\end{aligned}
\end{equation}
where $\delta_{t}^V = -Q_{tot}(s_t, \mathbf{u}_t) + r_t + \gamma V_{tot}(s_{t+1})$.

\begin{proof}
The one-step TD error can be expressed in the following form:
\begin{equation}
\begin{aligned}
&\delta_{t}^V = -Q_{tot}(s_t, \mathbf{u}_t) + r_t + \gamma V_{tot}(s_{t+1}) \\
&\delta_{t+1}^V = -Q_{tot}(s_{t+1}, \mathbf{u}_{t+1}) + r_{t+1} + \gamma V_{tot}(s_{t+2}) 
\end{aligned}
\end{equation}

Next, the k-step TD error can be represented in the following form:
\begin{equation}
\begin{aligned}
&\epsilon_t^{(1)} := -Q_{tot}(s_t, \mathbf{u}_t) + r_t + \gamma V_{tot}(s_{t+1}) = \delta^V_t\\
&\epsilon_t^{(2)} := -Q_{tot}(s_t, \mathbf{u}_t) + r_t + \gamma r_{t+1} {- \log\pi_{jt}(\mathbf{u_{t+1}}|s_{t+1})}  + \gamma^2 V_{tot}(s_{t+1}) = \delta^V_t + \gamma \delta^V_{t+1} \\
&\epsilon_t^{(k)} = \sum_{l=0}^{k-1} \gamma^l \delta^V_{t+l}
\end{aligned}
\end{equation}

\begin{equation}
\begin{aligned}
&\epsilon_t^{td(\lambda)} := (1 - \lambda) \left( \epsilon_t^{(1)} + \lambda \epsilon_t^{(2)} + \lambda^2 \epsilon_t^{(3)} + \cdots \right) \\
&= (1 - \lambda) \big( \delta_t^V + \lambda (\delta_t^V + \gamma \delta_{t+1}^V) + \lambda^2 (\delta_t^V + \gamma \delta_{t+1}^V + \gamma^2 \delta_{t+2}^V) + \cdots \big) \\
&= (1 - \lambda) \left( \delta_t^V (1 + \lambda + \lambda^2 + \cdots) + \gamma \delta_{t+1}^V (\lambda + \lambda^2 + \cdots) \right. \left. + \gamma^2 \delta_{t+2}^V (\lambda^2 + \lambda^3 + \lambda^4 + \cdots) + \cdots \right) \\
&= (1 - \lambda) \left( \delta_t^V \left( \frac{1}{1 - \lambda} \right) + \gamma \delta_{t+1}^V \left( \frac{\lambda}{1 - \lambda} \right) + \cdots \right) \\
&= \sum_{l=0}^{\infty} (\gamma \lambda)^l \delta_{t+l}^V
\end{aligned}
\label{eq:td_lambda_proof}
\end{equation}

\end{proof}

It is important to note that when computing the TD lambda error in the context of a reverse view and maximum entropy reinforcement learning, it is necessary to subtract the $\log(\pi)$ term from the \(t+1\) step's return as well as subtract $\log(
\pi)$ from $Q_{t+1}$ in order to obtain results consistent with Equation~\ref{eq:td_lambda_proof}. 

\clearpage

\section{Proof of Theorem~\ref{thm:policy_improvement_gap}}
\label{ap:proof_epsilon}

\begin{lemma}[Joint Soft Policy Improvement]
Suppose $\pi_{jt}^{\text{old}}\in\Pi$ and consider $\pi_{jt}^{\text{new}}$ to be the solution to Equation~\ref{eq:kl_divergence}. Then $Q^{\pi_{jt}^{\text{new}}}(s_t, \mathbf{u}_t) \geq Q^{\pi_{jt}^{\text{old}}}(s_t, \mathbf{u}_t)$ for all $(s_t, \mathbf{u}_t) \in S \times \mathbf{U}$ with $|\mathbf{U}| < \infty$.

\begin{proof} 

Let $\pi_{jt}^{\text{old}} \in \Pi$ and let $Q^{\pi_{jt}^{\text{old}}}$ and $V^{\pi_{jt}^{\text{old}}}$ be the corresponding soft state-action value and soft state value. The update rule of $\pi_{\text{new}}$ can be defined as:
\begin{equation}
\begin{aligned}
\pi_{\text{new}}(\cdot \mid s_t) &= \arg\min_{\pi' \in \Pi} \text{D}_{KL} \left( \pi'(\cdot) \parallel \exp(Q^{\pi_{jt}^{\text{old}}}(s_t, \cdot)) - \log Z^{\pi_{jt}^{\text{old}}}(s_t)\right) \\
&= \arg\min_{\pi' \in \Pi} J_{\pi_{jt}^{\text{old}}}\left(\pi'(\cdot \mid s_t)\right),
\end{aligned}
\end{equation}
where $Z^{\pi_{jt}^{\text{old}}}(s_t)$ is the normalization term.

Since we can always choose $\pi_{\text{new}} = \pi_{\text{old}} \in \Pi$, the following inequality always hold true: $J_{\pi_{jt}^{\text{old}}}(\pi_{\text{new}}(\cdot \mid s_t)) \leq J_{\pi_{jt}^{\text{old}}}(\pi_{jt}^{\text{old}}(\cdot \mid s_t))$. Hence

\begin{equation}
\begin{aligned}
&\mathbb{E}_{\mathbf{u}_t \sim \pi_{jt}^{\text{new}}}\left[ \log(\pi_{jt}^{\text{new}}(\mathbf{u}_t \mid s_t)) - Q^{\pi_{jt}^{\text{old}}}(s_t, \mathbf{u}_t) + \log Z^{\pi_{jt}^{\text{old}}}(s_t) \right] \\
&\leq \mathbb{E}_{\pi_{jt}^{\text{old}}}\left[ \log(\pi_{jt}^{\text{old}}(\mathbf{u}_t \mid s_t)) - Q^{\pi_{jt}^{\text{old}}}(s_t, \mathbf{u}_t) + \log Z^{\pi_{jt}^{\text{old}}}(s_t) \right],
\end{aligned}
\end{equation}

and the inequality reduces to the following form since partition function $Z^{\pi_{jt}^{\text{old}}}$ depends only on the state, 

\begin{equation}
\label{eq:pg_ineqaulity}
\mathbb{E}_{\mathbf{u}_t \sim \pi_{jt}^{\text{new}}}\left[ Q^{\pi_{jt}^{\text{old}}}(s_t, \mathbf{u}_t) - \log \pi_{jt}^{\text{new}}(\mathbf{u}_t \mid s_t) \right] \geq V^{\pi_{jt}^{\text{old}}}(s_t).
\end{equation}

Next, consider the soft Bellman equation:

\begin{equation}
\begin{aligned}
Q^{\pi_{jt}^{\text{old}}}(s_t, \mathbf{u}_t) 
&= r(s_t, \mathbf{u}_t) + \gamma \mathbb{E}_{s_{t+1} \sim p}\left[ V^{\pi_{jt}^{\text{old}}}(s_{t+1}) \right] \\
&\leq r(s_t, \mathbf{u}_t) + \gamma \mathbb{E}_{s_{t+1} \sim p}\big[ \mathbb{E}_{\mathbf{u}_{t+1} \sim \pi_{jt}^{\text{new}}}\big[ Q^{\pi_{\text{old}}}(s_{t+1}, \mathbf{u}_{t+1}) - \log \pi_{jt}^{\text{new}}(\mathbf{u}_{t+1} \mid s_{t+1}) \big] \big]\\
\vdots\\
&\leq Q^{\pi_{jt}^{\text{new}}}(s_t, \mathbf{u}_t),
\end{aligned}
\end{equation}
where we can repeatedly expand $Q^{\pi_{jt}^{\text{old}}}$ on the RHS by applying the soft Bellman equation, progressing from the second line to the final one.

\end{proof}
\end{lemma}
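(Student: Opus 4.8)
The plan is to adapt the soft policy improvement argument of the Soft Actor-Critic framework to the joint, IGI-constrained setting. First I would recast the minimization in Equation~\ref{eq:kl_divergence} as the minimization over $\pi'\in\Pi$ of the objective
\begin{equation}
J_{\pi_{jt}^{\text{old}}}\big(\pi'(\cdot\mid s_t)\big) = \mathbb{E}_{\mathbf{u}_t\sim\pi'}\left[\alpha\log\pi'(\mathbf{u}_t\mid s_t) - Q^{\pi_{jt}^{\text{old}}}(s_t,\mathbf{u}_t) + \alpha\log Z^{\pi_{jt}^{\text{old}}}(s_t)\right],
\end{equation}
so that $\pi_{jt}^{\text{new}}$ is, by definition, the constrained minimizer. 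The decisive step, and the one place where the restriction to the IGI family matters, is the observation that $\pi_{jt}^{\text{old}}\in\Pi$ is itself a feasible point of this constrained problem. Hence $J_{\pi_{jt}^{\text{old}}}(\pi_{jt}^{\text{new}})\leq J_{\pi_{jt}^{\text{old}}}(\pi_{jt}^{\text{old}})$ without needing the minimizer to coincide with the unconstrained softmax target; I do not require the KL divergence to vanish, only this single inequality.

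Next I would expand that inequality, cancel the state-only partition term $\alpha\log Z^{\pi_{jt}^{\text{old}}}(s_t)$ (which appears identically on both sides since it does not depend on $\mathbf{u}_t$), and rearrange to obtain the pointwise bound
\begin{equation}
\mathbb{E}_{\mathbf{u}_t\sim\pi_{jt}^{\text{new}}}\left[Q^{\pi_{jt}^{\text{old}}}(s_t,\mathbf{u}_t) - \alpha\log\pi_{jt}^{\text{new}}(\mathbf{u}_t\mid s_t)\right]\geq V^{\pi_{jt}^{\text{old}}}(s_t),
\end{equation}
which is exactly Equation~\ref{eq:pg_ineqaulity} and identifies the entropy-augmented $\pi_{jt}^{\text{new}}$-value of $Q^{\pi_{jt}^{\text{old}}}$ as dominating the old soft value.

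With this bound in hand, the final step is a telescoping expansion of the soft Bellman equation. Starting from $Q^{\pi_{jt}^{\text{old}}}(s_t,\mathbf{u}_t) = r(s_t,\mathbf{u}_t) + \gamma\,\mathbb{E}_{s_{t+1}}[V^{\pi_{jt}^{\text{old}}}(s_{t+1})]$, I would substitute the bound to replace $V^{\pi_{jt}^{\text{old}}}(s_{t+1})$ by the $\pi_{jt}^{\text{new}}$-expectation of $Q^{\pi_{jt}^{\text{old}}} - \alpha\log\pi_{jt}^{\text{new}}$, producing a chain of inequalities in which $Q^{\pi_{jt}^{\text{old}}}$ reappears one time step later along trajectories generated by $\pi_{jt}^{\text{new}}$. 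Iterating this substitution rolls the entropy-augmented reward forward; since $\gamma<1$ and $Q^{\pi_{jt}^{\text{old}}}$ is bounded on the finite action set $|\mathbf{U}|<\infty$, the remainder vanishes, and the limit of the right-hand side is precisely $Q^{\pi_{jt}^{\text{new}}}(s_t,\mathbf{u}_t)$ by the Joint Soft Policy Evaluation lemma. This delivers $Q^{\pi_{jt}^{\text{old}}}\leq Q^{\pi_{jt}^{\text{new}}}$ pointwise.

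The main obstacle I anticipate is making the feasibility argument airtight in the constrained setting: I must confirm that the projected minimizer $\pi_{jt}^{\text{new}}$ genuinely lies in $\Pi$ and that comparing it against $\pi_{jt}^{\text{old}}$ is legitimate even though, unlike single-agent SAC, the optimal KL need not be zero and $\pi_{jt}^{\text{new}}$ need not equal the exponential-of-$Q$ target. The resolution is to isolate the \emph{only} property of the projection the proof uses, namely $J(\pi_{jt}^{\text{new}})\leq J(\pi_{jt}^{\text{old}})$, which follows purely from feasibility of $\pi_{jt}^{\text{old}}\in\Pi$; everything downstream is the standard monotone telescoping argument and carries over unchanged.
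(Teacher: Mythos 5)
Your proposal is correct and follows essentially the same route as the paper's own proof: recast Equation~\ref{eq:kl_divergence} as minimization of $J_{\pi_{jt}^{\text{old}}}$ over $\Pi$, use feasibility of $\pi_{jt}^{\text{old}}\in\Pi$ to get $J(\pi_{jt}^{\text{new}})\leq J(\pi_{jt}^{\text{old}})$, cancel the partition term to obtain Equation~\ref{eq:pg_ineqaulity}, and telescope the soft Bellman equation, invoking the Joint Soft Policy Evaluation lemma for the limit. Your explicit remark that only the inequality $J(\pi_{jt}^{\text{new}})\leq J(\pi_{jt}^{\text{old}})$ is needed (not vanishing KL) is precisely the observation the paper relies on, so the two arguments coincide.
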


\begin{theorem}
Denote the policy before the policy improvement as \(\pi_{\text{old}}\) and the policy achieving the optimality of the improvement step as \(\pi_{\text{new}}\).
Updating the policy with Equation (\ref{eq:real_policy_improvement}) ensures that $Q^{\pi_{\text{old}}}_{tot}(s_t, \mathbf{u_t}) -\epsilon
\leq Q_{tot}^{\pi_{\text{new}}}(s_t, \mathbf{u_t}), \forall s_t,\mathbf{u_t}$.
The gap $\epsilon$ satisfies:
\begin{equation}
\begin{aligned} 
\epsilon \leq \frac{\gamma}{1-\gamma}\mathbb{E}_{s_{t+1}\sim \mathcal{P}}\big[C\sqrt{2 D_{KL}\big(\pi_{\text{old}}(\cdot|s_{t+1})|\pi_{\text{new}}(\cdot|s_{t+1})\big)}\big]
\end{aligned}
\end{equation}
where $\mathcal{P}(s_t, \mathbf{u_t}, \cdot)$ is the transition function and $C  = \underset{\mathbf{u}_{t+1}}{\mathrm{max}}\ \Delta_Q(s_{t+1}, \mathbf{u}_{t+1}) = \underset{\mathbf{u}_{t+1}}{\mathrm{max}}\ |Q_{tot}^{\pi_{\text{old}}}(s_{t+1}, \mathbf{u}_{t+1}) - \sum_i \text{OPT}_i(Q_i^{\pi_{\text{old}}}(o^i_{t+1}, u^i_{t+1}))|$.

\begin{proof}

We define the value function as $V'^{\pi_{jt}^{\text{new}}}(s) = \mathbb{E}_{\mathbf{u}_{t+1} \sim\pi_{jt}^{\text{new}}}\big[ \sum_i\text{OPT}_i(Q_i^{\pi_{jt}^{\text{new}}}(o^i_{t+1}, u^i_{t+1})) - \log \pi_{jt}^{\text{new}}(\mathbf{u}_{t+1} \mid s_{t+1}) \big] \big]$. Then,
similar to Equation~\ref{eq:pg_ineqaulity}, during policy improvement step, we have:

\begin{equation}
\mathbb{E}_{\mathbf{u}_t \sim \pi_{jt}^{\text{new}}}\left[ \sum_i \text{OPT}_i(Q_i^{\pi_{jt}^{\text{old}}}(o^i_t, u^i_t)) - \log \pi_{jt}^{\text{new}}(\mathbf{u}_t \mid s_t) \right] \geq V'^{\pi_{jt}^{\text{old}}}(s_t).
\end{equation}

By expanding the soft Bellman equation, we have:

\begin{equation}
\begin{aligned}
Q_{tot}^{\pi_{jt}^{\text{old}}}(s_t, \mathbf{u}_t) 
&= r(s_t, \mathbf{u}_t) + \gamma \mathbb{E}_{s_{t+1} \sim p}\left[ V_{tot}^{\pi_{jt}^{\text{old}}}(s_{t+1}) \right] \\
&= r(s_t, \mathbf{u}_t) + \gamma\big(\mathbb{E}_{s_{t+1} \sim p}\left[ V'^{\pi_{jt}^{\text{old}}}(s_{t+1}) \right] + \epsilon_1\big) \\
&\leq r(s_t, \mathbf{u}_t) + \gamma \mathbb{E}_{s_{t+1}}\big[ \mathbb{E}_{\mathbf{u}_{t+1} \sim \pi_{jt}^{\text{new}}}\big[ \sum_i \text{OPT}_i(Q_i^{\pi_{jt}^{\text{old}}}(o^i_{t+1}, u^i_{t+1})) - \log \pi_{jt}^{\text{new}}(\mathbf{u}_{t+1} \mid s_{t+1}) \big] \big] + \gamma\epsilon_1\\
&= r(s_t, \mathbf{u}_t) + \gamma \mathbb{E}_{s_{t+1}}\big[ \mathbb{E}_{\mathbf{u}_{t+1} \sim \pi_{jt}^{\text{new}}}\big[ Q_{tot}^{\pi_{\text{old}}}(s_{t+1}, \mathbf{u}_{t+1}) - \log \pi_{jt}^{\text{new}}(\mathbf{u}_{t+1} \mid s_{t+1}) \big] \big] + \gamma(\epsilon_1+\epsilon_2)\\
\vdots\\
&\leq Q_{tot}^{\pi_{jt}^{\text{new}}}(s_t, \mathbf{u}_t) + \epsilon,
\end{aligned}
\end{equation}
where $p$ is the transition function $\mathcal{P}(s_t,a_t,\cdot)$, and $\epsilon$ is the cumulative sum of $\epsilon_1$ and $\epsilon_2$ over time. $\epsilon_1$ and $\epsilon_2$ are defined as follows:
\begin{equation}
\begin{aligned}
\epsilon_1 &= \mathbb{E}_{s_{t+1} \sim p}\left[ V_{tot}^{\pi_{jt}^{\text{old}}}(s_{t+1}) - V'^{\pi_{jt}^{\text{old}}}(s_{t+1}) \right] \\
\epsilon_2 &= \mathbb{E}_{s_{t+1}\sim p}\big[ \mathbb{E}_{\mathbf{u}_{t+1} \sim \pi_{jt}^{\text{new}}}\big[ \sum_i \text{OPT}_i(Q_i^{\pi_{jt}^{\text{old}}}(o^i_{t+1}, u^i_{t+1})) - Q_{tot}^{\pi_{\text{old}}}(s_{t+1}, \mathbf{u}_{t+1})\big]\big]
\end{aligned}
\end{equation}

Then, we have:
\begin{equation}
\begin{aligned}
\epsilon_1 + \epsilon_2 &= \mathbb{E}_{s_{t+1} \sim p}\left[ V_{tot}^{\pi_{jt}^{\text{old}}}(s_{t+1}) - V'^{\pi_{jt}^{\text{old}}}(s_{t+1}) \right] \\
&\quad+ \mathbb{E}_{s_{t+1}\sim p}\big[ \mathbb{E}_{\mathbf{u}_{t+1} \sim \pi_{jt}^{\text{new}}}\big[ \sum_i \text{OPT}_i(Q_i^{\pi_{jt}^{\text{old}}}(o^i_{t+1}, u^i_{t+1})) - Q_{tot}^{\pi_{\text{old}}}(s_{t+1}, \mathbf{u}_{t+1})\big]\big] \\
& = \mathbb{E}_{s_{t+1} \sim p}\left[ \mathbb{E}_{\mathbf{u}_{t+1} \sim \pi_{jt}^{\text{old}}}\big[ Q_{tot}^{\pi_{\text{old}}}(s_{t+1}, \mathbf{u}_{t+1}) - \sum_i \text{OPT}_i(Q_i^{\pi_{jt}^{\text{old}}}(o^i_{t+1}, u^i_{t+1}))\big] \right] \\
&\quad+ \mathbb{E}_{s_{t+1}\sim p}\big[ \mathbb{E}_{\mathbf{u}_{t+1} \sim \pi_{jt}^{\text{new}}}\big[ \sum_i \text{OPT}_i(Q_i^{\pi_{jt}^{\text{old}}}(o^i_{t+1}, u^i_{t+1})) - Q_{tot}^{\pi_{\text{old}}}(s_{t+1}, \mathbf{u}_{t+1})\big]\big] \\
& = \mathbb{E}_{s_{t+1}\sim p}\big[\sum_{\mathbf{u}_{t+1}} \big(\pi_{jt}^{\text{old}}(\mathbf{u}_{t+1}|s_{t+1}) - \pi_{jt}^{\text{new}}(\mathbf{u}_{t+1}|s_{t+1})\big)\big(Q_{tot}^{\pi_{\text{old}}}(s_{t+1}, \mathbf{u}_{t+1}) - \sum_i \text{OPT}_i(Q_i^{\pi_{jt}^{\text{old}}}(o^i_{t+1}, u^i_{t+1}))\big) \big] \\
&\leq \mathbb{E}_{s_{t+1}\sim p}\big[\sum_{\mathbf{u}_{t+1}} |\pi_{jt}^{\text{old}}(\mathbf{u}_{t+1}|s_{t+1}) - \pi_{jt}^{\text{new}}(\mathbf{u}_{t+1}|s_{t+1})||Q_{tot}^{\pi_{\text{old}}}(s_{t+1}, \mathbf{u}_{t+1}) - \sum_i \text{OPT}_i(Q_i^{\pi_{jt}^{\text{old}}}(o^i_{t+1}, u^i_{t+1}))| \big] \\
&\leq \mathbb{E}_{s_{t+1}\sim p}\big[C \cdot \sum_{\mathbf{u}_{t+1}} |\pi_{jt}^{\text{old}}(\mathbf{u}_{t+1}|s_{t+1}) - \pi_{jt}^{\text{new}}(\mathbf{u}_{t+1}|s_{t+1})|\big],
\end{aligned}
\end{equation}
where $C = \underset{\mathbf{u}_{t+1}}{\max}\space\space |Q_{tot}^{\pi_{\text{old}}}(s_{t+1}, \mathbf{u}_{t+1}) - \sum_i \text{OPT}_i(Q_i^{\pi_{jt}^{\text{old}}}(o^i_{t+1}, u^i_{t+1}))|.$ Due to the Pinsker's inequality~\cite{pinsker1964information}, we have:
\begin{equation}
\begin{aligned}
\epsilon \leq \frac{\gamma}{1-\gamma}|\epsilon_1 + \epsilon_2| 
&\leq \frac{\gamma}{1-\gamma}\mathbb{E}_{s_{t+1}\sim p}\big[C \cdot \sum_{\mathbf{u}_{t+1}} |\pi_{jt}^{\text{old}}(\mathbf{u}_{t+1}|s_{t+1}) - \pi_{jt}^{\text{new}}(\mathbf{u}_{t+1}|s_{t+1})|\big] \\
&\leq \frac{\gamma}{1-\gamma}\mathbb{E}_{s_{t+1}\sim p}\big[C\cdot \sqrt{2 D_{KL}\big(\pi_{jt}^{\text{old}}(\cdot|s_{t+1})|\pi_{jt}^{\text{new}}(\cdot|s_{t+1})\big)}\big]
\end{aligned}
\end{equation}

\end{proof}
\end{theorem}

\clearpage

\subsection{Convergence Analysis}
\label{ap:convergence}

In this section, we provide a theoretical analysis of the convergence properties of the proposed ME-IGM algorithm. 
Our analysis leverages the theoretical framework of Regularized Modified Policy Iteration (RMPI) established in previous literature

\subsubsection{Theoretical Foundation: Regularized MPI}

As proved by Geist et al.~\citep{smirnova2019convergenceapproximateregularizedpolicy}, the convergence of policy iteration with time-varying regularization can be established through the following theorem:

\begin{theorem}[RMPI Convergence]
\label{thm:rmpi}
Consider the Regularized Modified Policy Iteration algorithm with time-varying regularization functions $\Omega_t$. Let the sequence $(\lambda_t)_{t}$ be a uniform bound for $\Omega_t$, such that:
\begin{equation} \label{eq:reg_app}
\sup_{\pi} \|\Omega_t(\pi)\|_{\infty} := \sup_{\pi, s} |\Omega_t(\pi(\cdot|s))| \le \lambda_t.
\end{equation}
The gap between the utility of the learned policy after $N$ iterations, $V_{N,\Omega}$, and the optimal utility $V^*$ is bounded by:
\begin{equation} \label{eq:reg-mpi-upper-bound_app}
\|V_{N,\Omega} - V^*\|_{\infty} \le \frac{2}{1-\gamma} \left( \Lambda_N + \gamma^N \|V_{0,\Omega} - V^*\|_{\infty} \right),
\end{equation}
where $\Lambda_N := \left(1 + \frac{1-\gamma^m}{1-\gamma}\right) \sum_{t=1}^{N-1} \gamma^{N-t} \lambda_t$. Notably, if $\lambda_t \to 0$ as $t \to \infty$, the algorithm converges to the optimal value function $V^*$.
\end{theorem}

\subsubsection{Mapping ME-IGM to RMPI}

The discrepancy induced by the Order-Preserving Transformation (OPT) and the decentralized policy update can be analyzed as a time-varying regularization term $\Omega_t(\pi)$ within the Regularized MPI framework (which we denote as $\epsilon$ in our derivation).

As derived in Theorem~\ref{thm:policy_improvement_gap}, the error term $\Omega_t(\pi)$ is bounded by: \begin{equation} \label{eq:me-igm-bound} \lambda_t = \frac{\gamma}{1-\gamma} \mathbb{E} \left[ \Delta_Q \cdot \sqrt{D_{KL}(\pi_{t-1} || \pi_t)} \right]. \end{equation} Here, $\Delta_Q$ represents the alignment gap between the global Q-value and the transformed individual Q-values: $\Delta_Q(s,u) \triangleq |Q_{tot}(s,u) - \sum_i OPT_i(Q_i, s)|$. By substituting Eq. \ref{eq:me-igm-bound} into the expression for $\Lambda_N$ in Eq. \ref{eq:reg-mpi-upper-bound_app}, we can quantify the total error induced by the misalignment $\Delta_Q$.

\subsubsection{Empirical Convergence Analysis}

Our current empirical evidence strongly supports the convergence of ME-IGM: 
\begin{enumerate}
    \item Vanishing $\Delta_Q$: Empirical results show that $\Delta_Q^2$ decreases by three orders of magnitude during the training process, indicating that the OPT effectively aligns the decentralized policies with the global Q-function. 
    \item Stable Policy Updates: The term $D_{KL}(\pi_{t-1} || \pi_t)$ remains non-increasing during training, suggesting that the policy updates become increasingly stable.
\end{enumerate}

Given these observations, we can approximately view $\lambda_t \to 0$ as training progresses. Under this approximation, Theorem~\ref{thm:rmpi} guarantees that the ME-IGM algorithm converges to the optimal value function $V^*$, fulfilling the IGM condition while enjoying the exploration benefits of maximum entropy.

\clearpage

\section{Hyper-network Structure}
\label{ap:hyper-net}
\begin{figure*}[t]
\subfloat[]{\begin{centering}
\includegraphics[width=0.5\linewidth]{figures/mixer.pdf}
\end{centering}
}\subfloat[]{\begin{centering}
\includegraphics[width=0.4\linewidth]{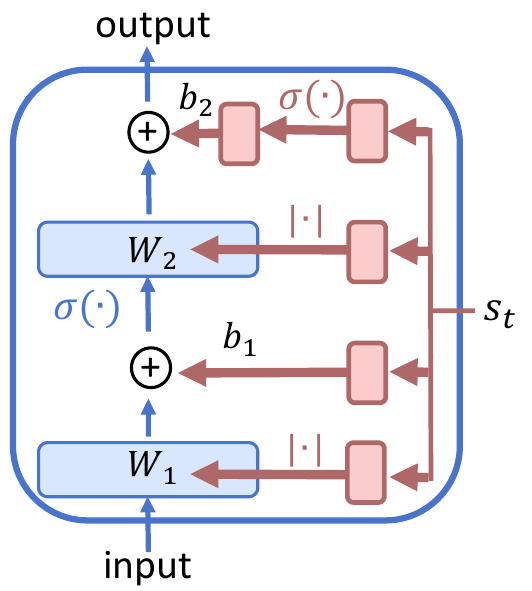}
\end{centering}
}
\caption{(a) The overall network architecture. Single Q-Net takes local observations as input and outputs a distribution over local actions \(u^i\), also denoted as local $Q_i$. Functions \text{OPT}s are order-preserving transformations, ensuring the input and output dimensions are identical and have the same argmax values. Both the input and output of the \text{OPT}s are k-dimensional, where \(k\) is the action dimension. The output of the \text{OPT}, after undergoing a softmax operation, becomes the policy, from which action \(u_t^i\) is sampled. The Mixer network takes local \(Q_i\) as input and outputs the global \(Q_{tot}\). Green blocks utilize a recurrent neural network as the backbone, with all agents sharing parameters. For decentralized execution, only the green network is needed, and the maximum values of their output are selected. (b) The blue blocks' network structure. The red blocks represent the hyper-network, which takes the global state \(s_t\) as input and outputs the network's weights and biases. The weights are constrained to be non-negative. The activation function is denoted by \(\sigma(\cdot)\). The mixer network takes an n-dimensional Q function as input and outputs a one-dimensional \(Q_{tot}\). }
\label{fig:all_net_structure}
\end{figure*}

Figure~\ref{fig:all_net_structure} shows the overall network structure. Specifically, \text{OPT}s share the same network structure as the mixer network, but while the mixer network's input dimension is ($\text{batch\_size, agent\_num}$) and its output dimension is ($\text{batch\_size}$, 1), both the input and output dimensions of \text{OPT}s are ($\text{batch\_size, action\_num}$). The network hyperparameters are as shown in the table below. In the SMAC-v2 experiment, we select a single layer for the \text{OPT} model to ensure fairness in comparison. Using two layers of \text{OPT} would significantly increase the number of parameters, making it less comparable to the baseline algorithms. While this may slightly reduce our method's performance, we adopt this choice for a fair comparison.

\begin{table}[H]
\centering
\begin{tabular}{lccc}
\hline
\textbf{layer name} & \textbf{hyper-net\_embed} & \textbf{embed\_dim} & \textbf{num\_layer} \\
\hline
mixer & 64 & 32 & 2 \\
\text{OPT} & 64 & - & 1 \\
\hline
\end{tabular}
\caption{Hyperparameters used for hyper-networks.}
\end{table}

\clearpage

\section{Comparison with Baselines}
\label{ap:baselines}

The following paragraphs present the differences between our method and those based on the Actor-Critic framework:

1. Using existing maximum entropy MARL algorithms may result in the loss of action order learned through credit assignment. Achieving the IGM condition, which outlines the order of local value functions, is crucial in credit assignment. Our work presents a value-based algorithm that introduces order-preserving transformations, ensuring that the order of the local actor matches the order of the local value functions. However, the commonly used maximum entropy MARL methods, employing the actor-critic framework, may not guarantee alignment between the actor's action order and that of the local Q-function due to approximation errors.

2. Actor-critic methods utilize the KL divergence as the loss function, while our approach uses the MSE loss. In the CTDE framework, all methods maintain a more accurate centralized Q-function alongside multiple imprecise local policies. The core idea is to distill knowledge from the centralized Q-function to the local policies. Actor-critic methods typically minimize the KL divergence between the actor and the softmax critic, whereas our method minimizes the MSE loss between their logits.~\cite{kim2021comparing} pointed out that, in distillation contexts, MSE is a superior loss function compared to KL divergence.

3. AC methods can use global information when training the critic, resulting in higher model capacity. Our method introduces global information through hyper-networks and the mixer network, which results in lower model capacity.

4. In fact, it's challenging to theoretically prove that value-based methods are inherently superior to AC methods. However, through experiments, we have demonstrated that our method surpasses both FOP~\cite{zhang2021fop} and HASAC~\cite{liu2023maximum}.

Below are comparisons with individual papers.

mSAC~\cite{pu2021decomposed} introduces a method similar to MASAC. It employs the AC architecture, thus encountering issues 1 and 2 mentioned above, but does not fully utilize global information to train the critic, hence lacking the advantage noted in point 3.

FOP~\cite{zhang2021fop} shares a similar issue with~\cite{pu2021decomposed}, in that the AC framework it utilizes can actually train the critic with global state information. Furthermore, FOP employs a credit assignment mechanism similar to that of QPLEX~\cite{wang2020qplex}. \cite{hu2021rethinking} experimentally shown that QMIX outperforms QPLEX. Additionally, FOP's proof concept is that under the IGO conditions, if we locally move local policies towards local optimal policies, the distance between the joint policy and the optimal joint policy also decreases. In contrast, our proof approach first establishes that our method is equivalent to the single-agent SAC algorithm and then completes the proof using existing conclusions.

HASAC~\cite{liu2023maximum}, in contrast to FOP~\cite{zhang2021fop}, avoids the restrictive Individual-Global-Optimal (IGO) assumption used for joint policy factorization by employing a theoretically grounded sequential optimization of individual policies via joint soft policy decomposition. However, our primary contribution addresses the misalignment problem, which is orthogonal to HASAC's focus. Moreover, since HASAC does not rely on either the IGM or IGO assumptions, establishing a direct theoretical comparison is nontrivial. Nevertheless, we empirically demonstrate that our method consistently outperforms HASAC across benchmark tasks presented in the paper.

PAC~\cite{zhou2022pac} utilizes global information during centralized training, possessing the advantages noted in point 3. However, it still encounter issues 1 and 2 mentioned above.


\clearpage

\clearpage

\section{Hyper-parameters}
\label{ap:hyper-para}

For all baseline algorithms, we implemented them using the corresponding open-source frameworks and chose the hyperparameters provided by these frameworks for replication. We used 
FOP by ~\citeauthor{zhang2021fop} and HASAC by~\citeauthor{zhong2023heterogeneousagent}. Our method, ME-IGM is built on~\citeauthor{hu2021rethinking}.  The experiments were conducted on a computer equipped with 92 GB of RAM, a 40-core CPU, and a GeForce RTX 2080 Ti GPU. The following are the hyperparameters used in the experiments.


\begin{table}[h]
\centering
\begin{tabular}{@{}lc@{}}
\toprule
Parameter Name & Value \\ \midrule
$\text{n\_rollout\_threads}$ & 8 \\
$\text{num\_env\_steps}$ & 10000000 \\
$\text{warmup\_steps}$ & 10000 \\
$\text{train\_interval}$ & 50 \\
$\text{update\_per\_train}$ & 1 \\
$\text{use\_valuenorm}$ & False \\
$\text{use\_linear\_lr\_decay}$ & False \\
$\text{use\_proper\_time\_limits}$ & True \\
$\text{hidden\_sizes}$ & [256, 256] \\
$\text{activation\_func}$ & $\text{relu}$ \\ 
$\text{use\_feature\_normalization}$ & True \\
$\text{final\_activation\_func}$ & tanh \\
$\text{initialization\_method}$ & orthogonal \\
$\text{gain}$ &  0.01 \\
$\text{lr}$ & 0.0003 \\
$\text{critic\_lr}$ & 0.0005 \\
$\text{auto\_}\alpha$ & True \\
$\alpha\text{\_lr}$ & 0.0003 \\
$\gamma$ & 0.99 \\
$\text{buffer\_size}$ & 1000000 \\
$\text{batch\_size}$ & 1000 \\
$\text{polyak}$ & 0.005 \\
$\text{n\_step}$ & 20 \\
$\text{use\_huber\_loss}$ & False \\
$\text{use\_policy\_active\_masks}$ & True \\
$\text{share\_param}$ & False \\
$\text{fixed\_order}$ & False \\
\bottomrule
\end{tabular}
\caption{HASAC hyperparameters used for SMACv2. We use the hyperparameters for SMAC as specified in the original paper~\cite{liu2023maximum}.}
\end{table}

\begin{table}[h]
\centering
\begin{tabular}{@{}lc@{}}
\toprule
Parameter Name & Value \\ \midrule
Runner & parallel \\
Batch Size Run & 4 \\
Buffer Size & 5000 \\
Batch Size & 128 \\
Optimizer & Adam \\
\(t_{\max}\) & 1005000 \\
Target Update Interval & 200 \\
Mac & \(\text{n\_mac}\) \\
Agent & \(\text{n\_rnn}\) \\
Agent Output Type & q \\
Learner & \(\text{nq\_learner}\) \\
Mixer & qmix \\
Mixing Embed Dimension & 32 \\
Hyper-net Embed Dimension & 64 \\
Learning Rate & 0.001 \\
\(\lambda\) & 0.4 \\ 
$\bar H$ (zerg, protoss) & 0.24 $\times$ num\_ally \\ 
$\bar H$ (terran) & 0.32 $\times$ num\_ally \\ 
$\alpha$ Learning Rate & 0.3 \\ \bottomrule
\end{tabular}
\caption{ME-QMIX hyperparameters used for SMACv2. We utilize the hyperparameters used in SMACv2~\cite{ellis2022smacv2}.}
\end{table}

\begin{table}[h]
\centering
\begin{tabular}{@{}lc@{}}
\toprule
Parameter Name & Value \\ \midrule
Runner & parallel \\
Batch Size Run & 8 \\
Buffer Size & 5000 \\
Batch Size & 128 \\
Optimizer & Adam \\
\(t_{\max}\) & 1005000 \\
Target Update Interval & 200 \\
Mac & \(\text{basic\_mac}\) \\
Agent & \(\text{rnn}\) \\
Agent Output Type & q \\
Learner & \(\text{dmaq\_qatten\_learner}\) \\
Mixer & dmaq \\
Mixing Embed Dimension & 32 \\
Hyper-net Embed Dimension & 64 \\
Learning Rate & 0.001 \\
\(\lambda\) & 0.6 \\ 
$\bar H$ (zerg, protoss) & 0.24 $\times$ num\_ally \\ 
$\bar H$ (terran) & 0.32 $\times$ num\_ally \\ 
$\alpha$ Learning Rate & 0.3 \\ \bottomrule
\end{tabular}
\caption{ME-QPLEX hyperparameters used for SMACv2. We utilize the hyperparameters used in SMACv2~\cite{ellis2022smacv2}.}
\end{table}

\begin{table}[h]
\centering
\begin{tabular}{@{}lc@{}}
\toprule
Parameter Name & Value \\ \midrule
Action Selector & Multinomial \\
\(\epsilon\)-Start & 1.0 \\
\(\epsilon\)-Finish & 0.05 \\
\(\epsilon\)-Anneal Time & 50000 \\
Buffer Size & 5000 \\
\(t_{\max}\) & 1005000 \\
Target Update Interval & 200 \\
Agent Output Type & \(\text{pi\_logits}\) \\
Learner & \(\text{fop\_learner}\) \\
Head Number & 4 \\
Mixing Embed Dimension & 32 \\
Learning Rate & 0.0005 \\
Burn In Period & 100 \\
\(\lambda\) & 0.4 \\ \bottomrule
\end{tabular}
\caption{FOP hyper-parameters used for SMACv2. We use the hyperparameters for SMAC as specified in the original paper~\cite{zhang2021fop}.}
\end{table}

\begin{table}[h]
\centering
\begin{tabular}{@{}lc@{}}
\toprule
Parameter Name & Value \\ \midrule
Runner & parallel \\
Batch Size Run & 4 \\
Buffer Size & 5000 \\
Batch Size & 512 \\
Optimizer & Adam \\
\(t_{\max}\) & 1005000 \\
Target Update Interval & 200 \\
Mac & \(\text{n\_mac}\) \\
Agent & \(\text{n\_rnn}\) \\
Agent Output Type & q \\
Learner & \(\text{nq\_learner}\) \\
Mixer & qmix \\
Mixing Embed Dimension & 32 \\
Hyper-net Embed Dimension & 64 \\
Learning Rate & 0.001 \\
\(\lambda\) & 0.6 \\ 
$\alpha$ Learning Rate & 0.3 \\ \bottomrule
\end{tabular}
\caption{ME-QMIX hyperparameters used for Overcooked. We utilize the hyperparameters used in Overcooked~\cite{carroll2019utility}.}
\end{table}

\begin{table}[h]
\centering
\begin{tabular}{@{}lc@{}}
\toprule
Parameter Name & Value \\ \midrule
Runner & parallel \\
Batch Size Run & 8 \\
Buffer Size & 5000 \\
Batch Size & 512 \\
Optimizer & Adam \\
\(t_{\max}\) & 1005000 \\
Target Update Interval & 200 \\
Mac & \(\text{basic\_mac}\) \\
Agent & \(\text{rnn}\) \\
Agent Output Type & q \\
Learner & \(\text{dmaq\_qatten\_learner}\) \\
Mixer & dmaq \\
Mixing Embed Dimension & 32 \\
Hyper-net Embed Dimension & 64 \\
Learning Rate & 0.001 \\
\(\lambda\) & 0.6 \\ 
$\alpha$ Learning Rate & 0.3 \\ \bottomrule
\end{tabular}
\caption{ME-QPLEX hyperparameters used for Overcooked. We utilize the hyperparameters used in Overcooked~\cite{carroll2019utility}.}
\end{table}

\clearpage

\end{document}